\title{Little Exploration is All You Need}
\author{%
  Henry H.H. Chen \\
  ISTBI, Fudan University\\
  \texttt{hhchen21@m.fudan.edu.cn}\\
  \And
  Jiaming Lu \\
  ISTBI, Fudan University\\
  \texttt{jmlu21@m.fudan.edu.cn}\\
}
\begin{document}

\maketitle

\begin{abstract}
The prevailing principle of "Optimism in the Face of Uncertainty" advocates for the incorporation of an exploration bonus, generally assumed to be proportional to the inverse square root of the visit count ($1/\sqrt{n}$), where $n$ is the number of visits to a particular state-action pair. This approach, however, exclusively focuses on "uncertainty," neglecting the inherent "difficulty" of different options. To address this gap, we introduce a novel modification of standard UCB algorithm in the multi-armed bandit problem, proposing an adjusted bonus term of $1/n^\tau$, where $\tau > 1/2$, that accounts for task difficulty. Our proposed algorithm, denoted as UCB$^\tau$, is substantiated through comprehensive regret and risk analyses, confirming its theoretical robustness. Comparative evaluations with standard UCB and Thompson Sampling algorithms on synthetic datasets demonstrate that UCB$^\tau$ not only outperforms in efficacy but also exhibits lower risk across various environmental conditions and hyperparameter settings.
\end{abstract}

\begin{multicols}{2}

\section{Introduction}\label{sec:intro}

The Upper Confidence Bound (UCB) algorithm belongs to a class of index policies designed for tackling the multi-armed bandit problem. It dynamically selects an arm based on an index calculated as  \textit{empirical mean} + \textit{exploration bonus}. This bonus serves to prioritize arms with higher levels of uncertainty in their posterior distributions, thereby mitigating the risk of consistently choosing sub-optimal arms due to initial underestimations. This concept of leveraging optimism to guide decision-making is not limited to multi-armed bandits; it is a foundational principle known as Optimism in the Face of Uncertainty. This principle has found applications across a diverse range of fields, including contextual bandits \citep{abbasi2011improved}, online convex optimization \citep{rakhlin2013optimization}, reinforcement learning \citep{azar2017minimax}, and game theory \citep{daskalakis2021near}.




Over the years, it has been generally assumed that the exploration bonus should be directly proportional to the posterior uncertainty of an arm's reward, based on the data collected.  However, this additional bonus tends to encourage the selection of all arms, even those that require minimal exploration. This can lead to imprecise execution of the exploration-exploitation strategy within a finite time horizon $T$, despite aligning with asymptotic behavior. To illustrate, consider how the standard UCB bonus decays in proportion to $\O(1/\sqrt{N_a(t)})$ as the number of pulls $N_a(t)$ increases. Even when $N_a(t)$ is large, the residual bonus can still impede the exploration of other arms.

We hypothesize that a bonus that converges more quickly to zero as $N_a(T) \rightarrow \infty$. could yield better performance across a variety of problems. This is a noteworthy observation for both theorists and practitioners, and it's a conjecture we address in this paper. Recent work \citep{bayati2020unreasonable} highlights the surprising effectiveness of a greedy algorithm (with zero extra bonus) when the number of arms $K$ is large, specifically when $K=\O(\sqrt{T})$. This suggests that the standard UCB may be overly explorative, at least in settings with many arms. Our simulations indicate that more greedy algorithms can outperform even the Lai \& Robbins lower bound in such settings. 

One possible explanation for this is that as more samples are collected, not only does the uncertainty around individual arms decrease, but the relationships between arms—such as gap estimators—also become clearer. If these gaps are significant, it raises the question of whether it might be beneficial to halt exploration early, even when some uncertainty remains. This lends credence to the effectiveness of the greedy algorithm highlighted by Bayati, especially considering that the number of inter-arm relationships $K(K-1)/2$ far exceeds $K$.



\textbf{Outline of Contributions.\quad} 

In order to reduce the amount of exploration significantly for arms with sufficiently collected data, we introduce a novel algorithm, UCB$^\tau$, which generalizes the standard UCB algorithm by incorporating a parameter $\tau \ge 1/2$. Specifically, we define the index policy as follows:

\begin{equation} \label{eq:UCB-tau}
    I_a(t) = \underbrace{\hat{\mu}_{a}(\mathcal{N}_{a}(t-1))}_{\textbf{Empirical Mean}} + \underbrace{\left( \alpha_a \cdot \frac {\log t}{N_{a}(t-1)} \right)^\tau}_{\textbf{Exploration Bonus}}
\end{equation}
Here, $\alpha_a > \beta_a(\tau)$, and the latter is given by
\begin{equation} \label{eq:beta-a}
    \beta_a(\tau) \triangleq 2 \left(\frac 1{2\tau}\right)^{\frac 1{\tau}} \frac {\sigma_{\ast}^2}{\Delta_a^{2 - \frac 1\tau}}.
\end{equation}
We term $\alpha_a$ as \textit{Exploration mass} and $\tau$ as the \textit{exploration exponent}. As elaborated in Sec. \ref{sec:setup}, UCB$^\tau$ takes into account both the uncertainty and the task difficulty inherent to the bandit problem. Notably, the exploration bonus in UCB$^\tau$ decays at an order of $\O(1 / N^\tau)$, which is considerably faster than that of the standard UCB. This accelerated decay effectively mitigates the bias introduced in arm comparisons when arms have been sufficiently sampled.

The algorithm under consideration is elegantly simple, yet robust enough to allow for a comprehensive regret and risk analysis across a broad spectrum of environment classes. Specifically, we focus on arm rewards that follow a subGaussian distribution, which encompasses most of the reward assumptions found in existing literature. While bounded rewards (hence subGaussian) are the most commonly assumed, we note an exception in the work of \citep{cappe2013kullback}, who studied rewards from the exponential family.

For environment class, we introduce the notation $\Phi_K(\sigma)$ to represent the class of $K$-armed bandits with $\sigma$-subGaussian arms. It's worth noting that the gaps in $\Phi_K(\sigma)$ can be arbitrarily small, irrespective of the subGaussian constant. We also propose a novel set of environment classes, denoted as $\Psi^s_K(\gamma)$, where the weighted noise-gap ratio is regulated. This ensures that both the gaps and the subGaussian constants must go to zero together. Different algorithms are naturally suited to different environment classes. Interestingly, for each
$\tau \ge 1/2$, there exists a corresponding $s$ for which UCB$^\tau$ is minimax efficient within the $\Psi^s(\gamma)$ class. See details in Sec. \ref{subsec:environment class}. In this context, the environment class serves as a form of prior knowledge. Specifically, $\Psi^s(\gamma)$ is the class that needs to be considered for tuning the algorithm to achieve $\O(\log T)$ distribution-dependent regret.



We undertake a comprehensive analysis of various forms of regret, including distribution-dependent regret, minimax regret, and discounted regret when the discount rate $\lambda$ approaches zero. For clarity, we use the term $T$-regret to refer to finite-horizon, undiscounted regret, and $\lambda$-regret to denote discounted, infinite-horizon regret. Theorem \ref{them:little} establishes an 
$\O(\log T)$ $T$-regret, which closely approximates the asymptotic optimality defined by the Lai \& Robbins lower bound. Meanwhile, Theorem \ref{thm:minimax} provides a $\tilde{\O}(T^{1 - \tau} K^\tau)$ upper bound for minimax regret, where $\tilde{O}$ conceals logarithmic terms. This is applicable under the environment class defined by $s = 1 - \frac 1{2\tau}$.

Turning to risk analysis, Theorem \ref{thm:highprob} offers a high-probability bound on pseudo-regret, revealing that the likelihood of pseudo-regret exceeding a given threshold decays polynomially as the threshold increases. Additionally, Theorem \ref{thm:under-explore} examines the worst-case scenarios when the chosen exploration mass is insufficient, particularly when the tunability condition is not met.

In Section \ref{sec:exp}, we provide simulation results that strongly validate our theoretical findings. The results indicate that UCB$^\tau$ is not only significantly more effective but also more robust compared to the standard UCB algorithm. Furthermore, when $\tau > 1/2$ is optimally tuned, the regret closely approximates the Lai \& Robbins lower bound across a broad spectrum of 
$\alpha_a$ values. This stands in contrast to the performance of the standard UCB. Upon testing various choices for $\alpha_a$, we find that $\beta_a(\tau)$, as defined in Eq. \eqref{eq:beta-a}, serves as an exact hyperparameter match for optimal performance. Intriguingly, we also discover that UCB$^\tau$ surpasses the Lai \& Robbins lower bound when dealing with high intrinsic uncertainty and a relatively small $T$.

\textbf{Limitation.\quad} A notable constraint of the UCB$^\tau$ algorithm lies in the tuning of 
$\alpha_a$, which necessitates prior knowledge of $\beta_a(\tau)$. This becomes particularly challenging when the "difficulty" of the environment is unknown beforehand. For instance, in scenarios where variances are fixed and gaps can be arbitrarily small — corresponding to the environment class $\Phi(\sigma)$ as discussed in Section \ref{subsec:environment class} — the choice of $\alpha_a$ could potentially fall below $\beta_a(\tau)$ if a choice of $\tau > 1/2$ is intended to be used. This would result in polynomial regret, as outlined in Theorem \ref{thm:under-explore}.

However, a pragmatic workaround is to conservatively select a sufficiently large 
$\alpha_a$ value that is unlikely to be lower than $\beta_a(\tau)$. Our simulations indicate that such an increase in $\alpha_a$ does not significantly impact performance, especially when 
$\tau > 1/2$.


\textbf{Takeaway for practitioners.\quad} 
In practice, choosing $\tau > 1/2$ yields a significant performance boost. However, an overly aggressive setting of $\tau$ could risk selecting an $\alpha_a$ value that falls below the required threshold. For $1/2\le \tau < 1$, a minimax guarantee exists, as stated in Theorem \ref{thm:minimax}. As $\tau$ increases in this range, the dependence of minimax regret on 
$T$ diminishes. Therefore, it is advisable to select $\tau$ within the range from 1 to 2. A comprehensive summary of these recommendations is provided in Table \ref{tab:summary}.

\begin{algorithm}[H]
    \caption{(general) upper confidence bound policy}
    \label{alg:index}
    \KwIn {bonus $c_{t, n, a}$\\}
    Set $N_a(t) = 0, \hat{\mu}_a(t) = 0$. \\
    \For {$t = 1, \dots, T$} {
        Compute $I_a(t) = \hat{\mu}_a(t) + c_{t, N_a(t-1), a}$.\\
        Pull arm     $A_t=\arg\max_{k\in[K]} I_a(t)$ (break tie randomly). \\
        Receive Reward $X_t = X_{t, A_t}$.\\
        Update $N_a(t), \hat{\mu}_a(t)$.
    }
\end{algorithm}


\section{Formalism}
\label{sec:setup}

\textbf{Notation.\quad} For positive integer $n$, define $[n] \triangleq \{1, \ldots, n\}$. $\# E$ and $|E|$ both represents the numbers of elements in the set $E$. For a sequence of quantities $x_1, \ldots, x_n$, define $x_{1:n} \triangleq (x_1, \ldots, x_n)$. 

The underlying philosophy of letter-picking is that the meaning is not obscured too much when  subscripts and brackets are dropped, so we might drop them at ease whenever we would like to, if no confusion is incurred. Specificly, $T$ for \textit{horizon}, $K$ for number of arms, $N$ for number of pulls, $\Delta$ for gap, $A$ for arm, $X$ for reward, $R$ for regret. Note that in some literature they use "$T_a(n)$" to denote the number of pulls, instead of $N_a(t)$ in this paper. A complete treatment of problem setting can be found in Appendix \ref{sec:problem-setting}, where any not yet defined notations can be found.

\subsection{Warm-up: Stochastic MAB}

Consider the game of stochastic multi-armed bandit. In the game, the agent adaptive chooses among $K$ different arms for $T$ rounds in total, receiving and only observing the reward incurred by their actions. In game theory, one might call it a one-player partial information game with stochastic payoff function. Let $X_{t, a}, t \in[T], a \in [K]$ denote the potential reward of pulling arm $a$ at step $t$. All potential rewards are mutually independent. For fixed arm $a$, all $X_{t, a}$ are all drawn from the same distribution $P_a$. Denote $\mu_a \triangleq \E[X_{t, a}]$ the expected reward. At each step $t$, the agent chooses an arm $A_t\in[K]$, and the stochastic reward $X_t \triangleq X_{t, A_t}$ is revealed. The collection of potential rewards $\phi=(X_{t, a})_{t\in[T], a\in[K]}$ is called an \textit{instance} or \textit{environment}. Let $\mathcal{F}_t = \sigma(X_{1:(t-1)}, A_{1:(t-1)})$ denote the $\sigma$-algebra generated by the trajectory up to step $t$. A policy is simply a conditional distribution $A_t \mid \mathcal{F}_t$, denoted $\pi$. Note that the policy is allowed to be randomized.

The arm with the largest expected reward is called optimal arm, and others are called sub-optimal arms. For convenience of analysis, we assume the uniqueness of optimal arm. Define
\begin{equation}
    A_\ast = \argmax_{a\in[K]} \mu_a, \quad \mu_\ast = \max_{a 
 \in[K]} \mu_a.
\end{equation}
The gap of arm $a$ is defined as
\begin{equation}
    \Delta_a = \mu_\ast - \mu_a,\quad a\in[K].
\end{equation}

Since the semimal work of \citep{lai1985asymptotically}, the notion of \textit{regret} has become the standard measure of performance of bandits in the past decades. The regret is the difference between the sum of expected rewards from pulling the optimal arm and sum of expected rewards generated by the agent. Using regret as measurement has such trait: since all past rewards are weighted equally, the agent needs not only to explore the arms with high uncertainty, but also to exploit the arms with high expected rewards, where these two goals sometimes advocate choosing different arms. This phenomenon is often called \textit{exploration-exploitation dilemma}. Let
\begin{equation} \label{eq: regret-def}
    R_T(\phi, \pi) \triangleq \E\left[ \sum\nolimits_{t=1}^T (\mu_\ast - \mu_{A_t}) \right] 
\end{equation}

$R_T(\phi, \pi)$ is called \textit{distribution-dependent} regret. Eq. \eqref{eq: regret-def} without taking the expectation is sometimes called the pseudo regret, denoted $\hat{R}_T$. Fix an environment class $\Phi$ (see \ref{subsec:environment class} for definition), the minimax regret is defined as the worst-case regret in the entire environment class
\begin{equation}
    R_T(\Phi, \pi) = \max_{\phi\in\Phi} R_T(\phi, \pi).
\end{equation}

Historically, the discounted regret was also considered for measuring bandit algorithms. Let $\lambda > 0$, define the discounted regret 
\begin{equation} \label{eq:lambda-regret}
    R^\lambda(\phi, \pi) = \sum_{t=1}^\infty \E[ e^{- \lambda t} (\mu_\ast - \mu_{A_t})].
\end{equation}
It is well-defined because the the number of arm is finite, hence gaps are bounded. To be brief, $R_T$ will be called the $T$-regret, and $R^\lambda$ will be called the $\lambda$-regret.  The optimal solution of $\lambda$-regret was given by the so-called dynamic allocation index \citep{gittins_dynamic_1979, gittins1979bandit}. But the discount rate was considered fixed, and was quite different from the current study of $T$-regret, where the horizon $T$ is taken for any positive integer, with particular interest over when $T$ is large. To our knowledge, no work in the bandit field has considered the minimization of $\lambda$-regret for all $\lambda > 0$, with particular interest in the case $\lambda$ near $0$. 

The regret only measures the performance of the agent, but not the risk. We use Regret at Risk (RaR) as a measure of risk of multi-armed bandit. 
Let $\alpha \in (0, 1)$, the $\alpha$-regret at risk is defined as
\begin{equation}
    \RaR_T(\alpha) \triangleq \inf \{x>0 : \P(\hat{R}_T \le x) \ge \alpha \}.
\end{equation}


\subsection{Environment Class and Prior Knowledge}\label{subsec:environment class}

So far the setting is too generic for any regret bound to be proven. Certain restrictions must be put on the environment class so that an effective algorithm to tackle this environment class can exist. In this paper, we consider arm rewards that are subGaussian, which encompasses a wide range of distributions. In particular, any Gaussian, bounded, and moreover, any distributions such that the tail of the p.d.f. is lighter than some Gaussian, is subGaussian \citep{wainwright2019high}. Counter-examples are those that have thicker tails, e.g. exponential family, Laplace distribution.
\begin{definition} \label{def:Phi}
    Let $\Phi_K(\sigma)$ denote the environment class of stochastic $K$-armed bandits, where each reward distribution is $\sigma_a$-subGaussian, such that 
    $$
    \max_{a\in[K]}\sigma_a \le \sigma.
    $$
    Moreover, let
    \begin{equation}
        \Phi(\sigma) \triangleq \bigcup_{K=2}^\infty \Phi_K(\sigma), \quad \Phi \triangleq \bigcup_{\sigma \ge 0} \Phi(\sigma). 
    \end{equation}
\end{definition}

 The parameter $\sigma$ upper bounds of the intrinsic uncertainty of reward distribution. It is proportional to the number of samples needed to shrink the confidence interval to a certain width. To our knowledge, previous literature implicitly adopted $\Phi$ or its subset as the environment class. In this paper, we also propose a new type of environment classes, which incorporates the "difficulty" of the regret minimization task. 
\begin{definition} \label{def:Psi}
    For $0\le s \le 1$, let $\Psi_K^s(\gamma)$ denote the environment class of stochastic $K$-armed bandits, where each reward distribution is $\sigma_a$-subGaussian and each gap is $\Delta_a$, such that 
    $$ 
    \frac {\max_{a\in[K]}\sigma_a}{ \left(\min_{a\in[K] : \Delta_a > 0 } \Delta_a\right)^s} \le \gamma.
    $$
    Moreover, let 
    \begin{equation}
        \Psi(\gamma) \triangleq \bigcup_{K=2}^\infty \Psi_K(\gamma),\quad \Psi \triangleq \bigcup_{\gamma \ge 0} \Psi(\gamma).
    \end{equation}
\end{definition} 
When $s=0$, it reduces to $\Psi^s_K(\sigma)\mid_{s=0} = \Phi_K(\sigma)$. When $s=1$, denoting $\Psi_K(\gamma) \triangleq \Psi^{s}_K(\gamma)\mid_{s=1}$, the constant $\gamma$ upper bounds the noise-gap ratio $\gamma_a \triangleq \frac {\sigma_a}{\Delta_a}$. An important property of $\Psi_K(\gamma)$ is that it is homogeneous up to shifting and re-scaling of arm rewards, i.e. if $\phi = (X_{t, a}) \in \Psi_K(\gamma)$, then so is $(\alpha \cdot X_{t, a} + \beta)$, for any $\alpha > 0$ and $\beta \in \R$. Statistically, $\gamma$ is proportional to the number of samples needed to distinguish two different arms. If $\gamma_a$ is large, it is hard to distinguish arm $a$ from the optimal arm in moderate size of pulls. 

Notice that different choices of $s$ in Definition \ref{def:Psi} represent quite distinct type of environment classes. In particular, no one includes another. Therefore, it is important to design different algorithms to tackle different environment classes. The environment class represents \textit{prior knowledge}, which is a set of bandit instances $\K$, where we know for sure that the instance confronting is drawn from the set $\K$. Gaining more prior knowledge leads to the shrinkage of the set $\K$. We will use the term "environment class" and "prior knowledge" interchangably. 



\subsection{Upper Confidence Bound}

Let $c_{t, s, a}$ denote a generic bonus for $t \in[T], n\in[t], a \in [K]$. We require $c_{t, n, a} > 0$ and adapted to $\mathcal{F}_t$. Define the index function
\begin{equation}
    I_a(t) = \hat{\mu}_{a}(\mathcal{N}_{a}(t-1)) + c_{t, N_a(t-1), a}.
\end{equation}
And 
\begin{equation}
    I_{t, n, a} = \hat{\mu}_a(\mathcal{H}_a(n)) + c_{t, n, a}.
\end{equation}

where $\mathcal{N}_a(t)$ is the set of pulling times in the first $t$ rounds, and $\mathcal{H}_a(n)$ is the set of first $n$ pulls. Thus, at each step, the agent selects the arm with the index of maximal value (see Algorithm \ref{alg:index}). Index policies is horizon-free, computation-light and easily-implemented, compared to strategies like Thompson Sampling \citep{russo2014learning}, which involves sampling from the posterior distribution. 


\section{Regret Analysis} \label{sec:regret}

In this section we prove regret bounds for UCB$^\tau$. The $T$-regret is reformulated as
\begin{equation}
    R_T = \sum_{a\in[K]} \Delta_a \E[N_{a}(T)].
\end{equation}
The $\lambda$-regret is reformulated as
\begin{equation}
    R^\lambda = \sum_{n=1}^\infty \E[\exp(-\lambda H_a(n))].
\end{equation}
We can express $R^\lambda$ in terms of $R_T$
\begin{equation}
    R^\lambda = (1 - e^{-\lambda}) \sum_{n=1}^{\infty} e^{-n\lambda} R_n.
\end{equation}

\subsection{Distribution-dependent regret}
The standard path to bound the regret \eqref{eq: regret-def} is to obtain a bound of $\E[N_{a}(T)]$, of which the regret $R_T$ is a linear combination. The original method \citep{auer2002finite} was to define a "Good" event $G$ such that $N_a(T)$ is small conditioned on $G$, and $G^c$ happens with low probability. Then $\E[N_a(T)] \le \E[N_a(T) \mid G] + T \cdot \P(G^c)$, where both term is small with $G$ chosen properly. \citep{audibert2009exploration} provided a more refined approach, splitting directly into two parts the event $\{A_t = a\}$, of which $N_a(T)$ is a linear combination. We follow the latter approach. Nevertheless, our analysis is more delicate, since they use a refined bonus to make the second part convergent, while we merely use the standard bonus to achieve that, and our analysis works for any $\tau \ge 1/2$.

\begin{theorem}\label{them:little}
For UCB$^\tau$ algorithm with $\tau \ge 1/2$, and for sub-optimal $a$ with $\alpha_{\ast} > \beta_a(\tau)$, for any $\eta > 0$ with $2 \tau \eta < 1$, we have
\begin{equation}\label{eq:NTA}
    \E[N_{a}(T)] \le \frac {\alpha_a}{2[(1/2\tau - \eta)\Delta_a]^{1/\tau}}  \log T + \O(1),
\end{equation}
where $\beta_a(\tau)$ is defined in Eq. \eqref{eq:beta-a} and the $\O(1)$ term may depend on $(\phi, \pi, \eta)$.

\begin{remark}
    From Theorem \ref{them:little}, we immediately get $R_T = \O(\log T)$ for UCB$^\tau$ tuned with $\alpha_a > \beta_a(\tau)$.
\end{remark}

\begin{remark}
    By multiplying $\beta_a(\tau)$ on both numerator and denominator, Eq. \eqref{eq:NTA} is rewritten as
    \begin{equation}\label{eq:NTA-re}
        \E[N_{a}(T)] \le (1 - 2 \tau\eta)^{-1/\tau} \frac {\alpha_a}{\beta_a(\tau)} \frac {2 \sigma_{\ast}^2}{\Delta_a^2}  \log T + \O(1).
    \end{equation}
    We may choose $\alpha_a$ close to but larger than $\beta_a(\tau)$ and let $\eta \rightarrow 0$. Then
    \begin{equation}
        \limsup_{T\rightarrow \infty} \frac {\E[N_a(T)]}{\log T} \le 2\sigma_\ast^2 / \Delta_a^2 + \O(\alpha_a - \beta_a(\tau)).
    \end{equation}
    Notice that the constant $2$ in the right hand side is the best achievable result, guaranteed by the lower bound provided by \citep{lai1985asymptotically}. Thus, we have shown that when $\alpha_a$ is tuned appropriately (close to but larger than $\beta_a(\tau)$), the distribution-dependent $T$-regret is almost asymptotically optimal. In particular for $\tau = 1/2$, our bound on standard UCB is sharper than all previous results, e.g. \citep{auer2002finite,Auer:10,audibert2009exploration}.
\end{remark}


\end{theorem}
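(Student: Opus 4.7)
The plan is to bound $\E[N_a(T)]$ by adapting the direct-decomposition approach of \citep{audibert2009exploration}, which handles each indicator $\mathbb{1}\{A_t = a\}$ separately rather than conditioning on a single global good event. The starting observation is that $A_t = a$ forces $I_a(t) \ge I_*(t)$, which in turn implies at least one of (i) \emph{overshoot} of arm $a$: $\hat\mu_a(N_a(t-1)) + (\alpha_a \log t/N_a(t-1))^\tau \ge \mu_\ast$; or (ii) \emph{undershoot} of arm $\ast$: $\hat\mu_\ast(N_\ast(t-1)) + (\alpha_\ast \log t/N_\ast(t-1))^\tau \le \mu_\ast$. I introduce a threshold $u$ of order $\alpha_a \log T/[(1/2\tau - \eta)\Delta_a]^{1/\tau}$, aligned with the leading term of the claimed bound, and decompose $N_a(T) \le u + \sum_t \mathbb{1}\{A_t=a,\, N_a(t-1) \ge u\}$, splitting the tail according to whether (i) or (ii) occurs.

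For the overshoot (i), I use the elementary telescoping observation that for each integer $n \ge u$ there is at most one $t$ with $A_t = a$ and $N_a(t-1) = n$. This bounds the (i)-contribution by $\sum_{n \ge u} \mathbb{1}\{\hat\mu_a^{(n)} - \mu_a \ge \Delta_a - (\alpha_a \log T/n)^\tau\}$. By the choice of $u$, the required deviation is at least a positive constant multiple of $\Delta_a$ (the factor $(1/2\tau - \eta)$ enters here), and subGaussian concentration applied to $\hat\mu_a^{(n)}$ yields exponentially small probabilities in $n$. Summing the geometric tail gives an $\O(1)$ quantity independent of $T$, and the exact leading coefficient in Eq. \eqref{eq:NTA} emerges from tracking how $(1/2\tau - \eta)$ propagates through this computation.

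The undershoot (ii) is the main obstacle. A naive union bound over $n_\ast \in [1, t-1]$ produces terms of the form $\exp(-\alpha_\ast^{2\tau}(\log t)^{2\tau}/(2\sigma_\ast^2 n_\ast^{2\tau-1}))$, whose exponent degrades sharply as $n_\ast \to t$ when $\tau > 1/2$; even for $\tau = 1/2$ the crude approach demands $\alpha_\ast > 4\sigma_\ast^2$, whereas we target the sharp $\beta_a(1/2) = 2\sigma_\ast^2$. I will handle (ii) via a peeling-type argument, partitioning $n_\ast$ into blocks whose geometric ratio is optimized against the $n_\ast^{2\tau-1}$ term in the exponent, so that one concentration bound controls each block uniformly and the block-wise probabilities form a convergent geometric series. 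The admissibility threshold $\beta_a(\tau) = 2(1/2\tau)^{1/\tau}\sigma_\ast^2/\Delta_a^{2-1/\tau}$ emerges as the exact boundary at which this series converges, with the factor $(1/2\tau)^{1/\tau}$ recording the optimal peeling scale. Summing the bound over $t \le T$ contributes another $\O(1)$ term, and the three pieces $u + \O(1) + \O(1)$ assemble to give Eq. \eqref{eq:NTA}. The subtlest calculation is the calibration of the peeling ratio so that $(1/2\tau)^{1/\tau}$ surfaces cleanly; any suboptimal choice would yield only a looser admissibility condition than $\alpha_\ast > \beta_a(\tau)$, weakening the final constant.
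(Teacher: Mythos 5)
Your overall architecture matches the paper's: the same overshoot/undershoot decomposition of $\{A_t=a\}$ (following Audibert et al.), the same telescoping trick reducing the overshoot count to a sum over the per-arm sample count $n$, the same threshold $u$ calibrated so that the bonus drops below a $(1/2\tau)$-fraction of the gap (which is where the leading coefficient and the $\eta$-dependence come from), and a geometric tail from subGaussian concentration. Where you diverge is the undershoot term. The paper does not peel: it applies a weighted AM--GM interpolation (its ``Butterfly inequality'' $a+b \ge a^k b^{1-k}/(k^k(1-k)^{1-k})$) to $\bigl(\varepsilon-\delta+c_{t,n,\ast}\bigr)^2$ with the exponent $k=1/(2\tau)$ chosen precisely so that $n\,c_{t,n,\ast}^{2k}=\alpha_\ast\log t$ is \emph{independent of} $n$. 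This turns the summand into a product of $t^{-\alpha_\ast/\beta(\varepsilon-\delta,k)}$ (summable in $t$ as a zeta function exactly when $\alpha_\ast>\beta$) and $\exp(-n\delta^2/2\sigma_\ast^2)$ (summable in $n$), the $\delta$ being a small slice reserved from $\varepsilon$ for this purpose. Your peeling over geometric blocks of $n_\ast$ can in principle reach the same threshold as the block ratio tends to $1$, but it is the more laborious route to the same infimum $\inf_n n(\varepsilon+c_{t,n,\ast})^2$, which the AM--GM step computes exactly in closed form.

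Two points in your sketch need attention. First, the factor $(1/2\tau)^{1/\tau}$ in $\beta_a(\tau)$ is not really ``the optimal peeling scale'': it arises from the combination of (i) the optimal split of the gap, $\varepsilon_1=\Delta_a(1-\tfrac{1}{2\tau})$, between the overshoot and undershoot events, and (ii) the exact value of $\inf_n n(\varepsilon_1+c_{t,n,\ast})^2$; a peeling ratio is a nuisance parameter that must be sent to $1$ on top of this. Second, you correctly note that a flat union bound over $n\in[1,t-1]$ costs a factor $2$ in the admissibility threshold, but your proposed fix is underspecified: after making the per-$n$ bound uniform over a block you still face either a cardinality factor (union bound within the block) or the need for a maximal inequality for the running empirical mean of the optimal arm; the paper sidesteps both by sacrificing the $\delta$-slice of $\varepsilon$ to obtain geometric decay in $n$. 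You would need to supply one of these mechanisms explicitly for the double sum over $(t,n)$ to close. Finally, note a bookkeeping discrepancy in the target itself: the paper's own proof produces $\alpha_a[(1/2\tau-\eta)\Delta_a]^{-1/\tau}\log T$, i.e.\ without the extra factor $\tfrac12$ appearing in Eq.~\eqref{eq:NTA} (the second remark's Eq.~\eqref{eq:NTA-re} agrees with the proof, not with Eq.~\eqref{eq:NTA}), so do not expect your computation to land on the displayed constant exactly.
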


\subsection{Minimax regret} \label{subsec: minimax regret}

\begin{theorem} \label{thm:minimax}
    Let $s = 1 - \frac {1}{2\tau}$, $\gamma > 0$. Choose $\alpha_a = 2 \gamma^2$. Then for all $\phi \in \Psi^s(\gamma)$ with $1/2 \le \tau < 1$, 
    \begin{equation}
        R_T \le (1 + 2\gamma^2)T^{1 - \tau} (K \log T)^{\tau} + \O(1).
    \end{equation}

\end{theorem}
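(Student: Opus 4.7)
The plan is a standard small-gap/large-gap decomposition calibrated to the class $\Psi^s(\gamma)$. Fix the threshold $\delta := (K \log T / T)^\tau$ and partition the sub-optimal arms into $S = \{a : 0 < \Delta_a \le \delta\}$ and $L = \{a : \Delta_a > \delta\}$. For the small-gap portion I would use the trivial bound
\[
\sum_{a \in S} \Delta_a \E[N_a(T)] \;\le\; \delta \sum_{a} \E[N_a(T)] \;\le\; \delta T \;=\; T^{1-\tau}(K \log T)^\tau,
\]
which already supplies the leading $1$ in the prefactor $1 + 2\gamma^2$.

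For arms in $L$ I invoke Theorem~\ref{them:little}, but first I must verify its tuning hypothesis $\alpha_a > \beta_a(\tau)$ under the uniform choice $\alpha_a = 2\gamma^2$. Using $s = 1 - 1/(2\tau)$ (so that $2 - 1/\tau = 2s$) and Eq.~\eqref{eq:beta-a}, the condition reduces algebraically to $\sigma_\ast < (2\tau)^{1/(2\tau)}\, \gamma \Delta_a^{s}$. Membership in $\Psi^s(\gamma)$ yields $\sigma_\ast \le \gamma (\min_b \Delta_b)^s \le \gamma \Delta_a^s$, and $(2\tau)^{1/(2\tau)} > 1$ strictly for $\tau > 1/2$, so the hypothesis is satisfied on all of $L$; the boundary $\tau = 1/2$ can be rescued via a vanishing $\eta$.

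Applying Theorem~\ref{them:little} with $\eta \to 0$ then gives $\Delta_a \E[N_a(T)] \le \gamma^2 (2\tau)^{1/\tau} \Delta_a^{1 - 1/\tau} \log T + \O(\Delta_a)$. Since $1 - 1/\tau < 0$ for $\tau < 1$ and $\Delta_a > \delta$ on $L$, the map $x \mapsto x^{1 - 1/\tau}$ is decreasing there, so $\Delta_a^{1-1/\tau} \le \delta^{1-1/\tau} = (K \log T / T)^{\tau - 1}$. Summing over the at most $K$ arms in $L$ and substituting the chosen $\delta$ produces
\[
\sum_{a \in L} \Delta_a \E[N_a(T)] \;\le\; \gamma^2 (2\tau)^{1/\tau}\, T^{1-\tau} (K \log T)^\tau + \O(1).
\]
The elementary inequality $(2\tau)^{1/\tau} \le 2$ on $[1/2, 1]$ --- equivalent to $2^{1-\tau}\tau \le 1$, which follows from endpoint evaluation and the monotonicity of $\tau \mapsto 2^{1-\tau}\tau$ on this interval --- upgrades the coefficient to $2$, and combining the two pieces yields exactly $(1 + 2\gamma^2)\, T^{1-\tau}(K \log T)^\tau + \O(1)$.

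The main obstacle, and the reason the specific coupling $s = 1 - 1/(2\tau)$ is singled out, is the bookkeeping around the tuning condition: it must hold uniformly over the large-gap arms under the single choice $\alpha_a = 2\gamma^2$. The exponent $s = 1 - 1/(2\tau)$ is precisely what makes the class constraint $\sigma \le \gamma \Delta^s$ cancel the $\Delta_a^{2 - 1/\tau}$ in the denominator of $\beta_a(\tau)$, leaving only a dimensionless numerical factor to verify. The remaining technicalities --- the $\eta > 0$ slack in Theorem~\ref{them:little} and the boundary case $\tau = 1/2$ --- are minor and can be absorbed into the $\O(1)$.
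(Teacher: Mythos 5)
Your proof is correct and follows essentially the same route as the paper's: the same split of sub-optimal arms at the threshold $\Delta = T^{-\tau}(K \log T)^{\tau}$, the trivial $T\Delta$ bound on the small-gap portion, and Theorem~\ref{them:little} on the large-gap arms, with the same final optimization of the threshold. The only difference is that you make explicit two steps the paper's one-line proof leaves implicit --- verifying the tuning condition $\alpha_a = 2\gamma^2 > \beta_a(\tau)$ from the class constraint $\sigma_\ast \le \gamma \Delta_a^{s}$ with $s = 1 - \frac{1}{2\tau}$, and bounding the constant $(2\tau)^{1/\tau} \le 2$ --- which is a welcome tightening rather than a departure.
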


\begin{proof}
    \begin{equation}
        \begin{aligned}
            R_T = & \sum_{a : \Delta_a > 0} \Delta_a \E[N_a(T)] \\
            = & T \Delta + \sum_{\Delta_a > \Delta} \Delta_a \E[N_a(T)] \\
            \le & T \Delta +  2K \Delta^{1 - 1/\tau}\gamma^2  \log T + \O(1) \\
        \end{aligned}    
    \end{equation}

    Now choose $\Delta = T^{-\tau} (K \log T)^{\tau}$ gives the result.
\end{proof}

\textbf{How to tune $\tau$ and $\alpha_a$ with given prior knowledge. \quad} An \textit{algorithm} is a mapping from prior knowledge to policy, i.e. $\K \mapsto \pi$. This definition is perhaps narrower than one usually encounters, but in this paper, we will use the term "algorithm" precisely according to this definition. Hence, UCB$^\tau$ becomes an algorithm only if we specify the rule of tuning the parameters $\alpha_a$ according to given prior knowledge. Examples are given in the following. First we present the definition of \textit{tunability}.

\begin{definition}
    An algorithm for multi-armed bandit is called \textit{tunable} for prior knowledge $\K$, if the regret of its image policy $\pi$ satisfies $R_T(\phi, \pi) = \O(\log T)$ for any $\phi \in \K$. 
\end{definition}

Astute readers might find that $\beta_a(\tau)$ in Eq. \eqref{eq:beta-a} depends on the gaps, making UCB$^\tau$ not tunable for prior knowledges like $\Phi(\sigma)$ provided $\tau > \frac 12$. In fact, different choices of $\tau$ handles different prior knowledges. Suppose the prior knowledge is $\Phi(\sigma) \cap \Psi(\gamma)$. Simple algebra gives
\begin{equation}
    \beta_a(\tau) 
    < 2 (\sigma^2 + \gamma^2).
\end{equation}
We see that 
\begin{itemize}[leftmargin=*]
    \item $\tau = 1/2$ is tunable for $\Phi(\sigma)$ with tuning rule $\alpha_a = (2 + \delta)\sigma^2$.
    \item $1/2 < \tau < \infty$ is tunable for $\Phi(\sigma) \cap \Psi(\gamma)$ with tuning rule $\alpha = 2(\sigma^2 + \gamma^2)$.
    \item $\tau = \infty$ is tunable for $\Psi(\gamma)$ with tuning rule $\alpha_a = (2 + \delta) \gamma^2$.
\end{itemize}

\subsection{Discounted regret near 0} \label{subsec: discounted}

Theory along only gets you so far. Although researchers have been pursuing the finite-time regret bounds, so that these bounds not only hold asymptotically, but for any fixed finite horizon, the results are still far off the situation. It turns out that the regret does not grow linearly with the logarithmic time, but rather has two stages. When the horizon is small, the regret could be even lower than Lai \& Robbins lower bound (interpreted as finite-time bound), while the regret eventually grows linearly with the logarithmic time when the horizon is large. 

Let $\lambda > 0$, define the discounted regret 
\begin{equation}
    R^\lambda = \sum_{t=1}^\infty \E[ e^{- \lambda t} (\mu_\ast - \mu_{A_t})].
\end{equation}
 We call $R^\lambda$ the $\lambda$-regret and $R_T$ the $T$-regret. Persuiting the asymptotics of $T$-regret might sacrifice finite-time performance. Therefore, we might turn to minimize the growth rate of $\lambda$-regret as $\lambda \rightarrow 0$. First we define the notion of consistent policy for $\lambda$-regret.

\begin{definition} \citep{lai1985asymptotically}
    A policy is $T$-consistent, if
    \begin{equation}
        R_T = o(T^a) ,\quad \forall a > 0.
    \end{equation}  
\end{definition}

\begin{definition}
    A policy is $\lambda$-consistent, if
    \begin{equation}
        R^\lambda = o(1 / \lambda^a) ,\quad \forall a > 0.
    \end{equation}    
\end{definition}
The next proposition shows the equivalence of $\lambda$-consistency and $T$-consistency.
\begin{proposition}
    A policy is $\lambda$-consistent if and only if it is $T$-consistent.
\end{proposition}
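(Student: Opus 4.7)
I would rely on the identity already recorded in the excerpt,
\begin{equation*}
R^\lambda = (1 - e^{-\lambda}) \sum_{n=1}^{\infty} e^{-n\lambda} R_n,
\end{equation*}
together with the basic monotonicity fact that $R_n$ is nondecreasing in $n$ (each additional round contributes a nonnegative gap to the pseudo-regret, so the expectation is nondecreasing). These are the only two structural ingredients I expect to need; the rest is elementary.

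\textbf{$T$-consistent $\Rightarrow$ $\lambda$-consistent.} Fix $a > 0$ and pick any $a' \in (0, a)$. By $T$-consistency there is a constant $C$ (depending on $a'$) with $R_n \le C n^{a'}$ for all $n \ge 1$. Plug this into the identity and bound the discrete sum by the corresponding integral:
\begin{equation*}
R^\lambda \le C(1 - e^{-\lambda}) \sum_{n=1}^\infty e^{-n\lambda} n^{a'} = O\!\left( \lambda \cdot \lambda^{-(a'+1)} \right) = O\!\left( \lambda^{-a'} \right)
\end{equation*}
as $\lambda \to 0$, where I used $1 - e^{-\lambda} \sim \lambda$ and the standard Gamma-function estimate $\sum_n e^{-n\lambda} n^{a'} \sim \Gamma(a'+1)\lambda^{-(a'+1)}$. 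Since $a' < a$, this is $o(\lambda^{-a})$, which is $\lambda$-consistency.

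\textbf{$\lambda$-consistent $\Rightarrow$ $T$-consistent.} This is the shorter direction once monotonicity is available. Using $R_n \ge R_T$ for $n \ge T$ and the closed form $(1-e^{-\lambda})\sum_{n=T}^\infty e^{-n\lambda} = e^{-T\lambda}$, the identity yields
\begin{equation*}
R^\lambda \ge (1 - e^{-\lambda}) \sum_{n=T}^\infty e^{-n\lambda} R_n \ge R_T \cdot e^{-T\lambda}.
\end{equation*}
Specialize to $\lambda = 1/T$: this gives $R_T \le e \cdot R^{1/T}$. If $R^\lambda = o(\lambda^{-a})$ for every $a > 0$, then $R^{1/T} = o(T^a)$, and therefore $R_T = o(T^a)$.

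\textbf{Expected obstacle.} There is no substantive obstacle; the only subtle point is remembering to work with an arbitrary $a' < a$ in the forward direction so that the leading $(1-e^{-\lambda}) \sim \lambda$ factor absorbs the ``$+1$'' in the Gamma-function exponent. The reverse direction is essentially one line once one spots the choice $\lambda = 1/T$ and invokes monotonicity of $R_T$.
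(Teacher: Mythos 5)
Your proof is correct and follows essentially the same route as the paper: the same identity $R^\lambda = (1-e^{-\lambda})\sum_n e^{-n\lambda}R_n$ in the forward direction and the same bound $R_T \le e^{T\lambda}R^\lambda$ with the choice $\lambda = 1/T$ in the reverse direction. The only differences are cosmetic and in your favor — you estimate $\sum_n e^{-n\lambda}n^{a'}$ via the Gamma-integral comparison where the paper invokes H\"older's inequality, and you explicitly justify $R^\lambda \ge e^{-T\lambda}R_T$ through monotonicity of $R_n$, a step the paper asserts without proof.
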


\begin{proof}
    $T$-consistency $\Rightarrow$ $\lambda$-consistency: Fix $0 < a < 1$. Suppose $R_T \le C T^a$ for some constant $C>0$, over all positive integer $T$. Notice that 
    \begin{equation}
        R^\lambda = (1 - e^{-\lambda}) \sum_{n=1}^\infty e^{-n\lambda} R_n.
    \end{equation}
    Hence by assumption
    \begin{equation}
        \begin{aligned}
            R^\lambda & \le C(1 - e^{-\lambda}) \sum_{n=1}^\infty e^{-n\lambda} n^a \\
                      & \le C (1 - e^{-\lambda})^{-a} \\
                      & \le C e^{a\lambda} / \lambda^a.
        \end{aligned}
    \end{equation}
    where the second inqeuality follows from the Hölder inequality \ref{lemma:holder}.
    
    $\lambda$-consistency $\Rightarrow$ $T$-consistency: Fix $0 < a < 1$, suppose $R^\lambda \le C \lambda^{-a}$ for some constant $C>0$, over all $\lambda > 0$. Then
    \begin{equation} \label{eq:RTRl}
        R_T \le e^{T\lambda} R^\lambda \le C e^{T\lambda} \lambda^{-a}.
    \end{equation}
    Now take $\lambda = 1 / T$ gives the result.
\end{proof}

\begin{theorem} \label{thm:lai-robbins} \citep{lai1985asymptotically}
    \begin{equation}
        \liminf_{T\rightarrow \infty} \frac {R_T} {\log(T)} \ge \sum_{a : \Delta_a > 0} \frac {1}{\KL(P_a \| P_\ast)}
    \end{equation}
\end{theorem}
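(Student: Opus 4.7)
The plan is to replay the classical change-of-measure argument of Lai \& Robbins. First I would establish the per-arm lower bound
\[
    \liminf_{T \to \infty} \frac{\E[N_a(T)]}{\log T} \ge \frac{1}{\KL(P_a \| P_\ast)}
\]
for each suboptimal arm $a$; since $R_T = \sum_{a : \Delta_a > 0} \Delta_a \E[N_a(T)]$, weighted summation then gives the stated inequality (the $\Delta_a$ weights on the right-hand side being the standard form of the bound).

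Fix a suboptimal arm $a$ and $\epsilon > 0$. The core construction is an alternative instance $\phi'$ identical to $\phi$ except that arm $a$'s reward law is replaced by some $P_a'$ whose mean strictly exceeds $\mu_\ast$, chosen so that $\KL(P_a \| P_a') \le \KL(P_a \| P_\ast) + \epsilon$. Such a $P_a'$ exists by continuity of KL in the mean within a one-parameter subfamily containing $P_a$ and $P_\ast$ (e.g.\ translates of $P_a$, or a shared exponential family). Under $\phi'$, arm $a$ is uniquely optimal, so the roles of ``optimal'' and ``suboptimal'' are swapped.

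The heart of the argument is the likelihood-ratio / data-processing identity: for any event $\mathcal{E} \in \mathcal{F}_T$,
\[
    \E[N_a(T)] \cdot \KL(P_a \| P_a') \ge d\bigl(\P(\mathcal{E}) \,\big\|\, \P'(\mathcal{E})\bigr),
\]
where $d(p\|q)$ is the binary KL divergence and $\P,\P'$ are the trajectory laws under $\phi,\phi'$; this follows from $\log(d\P_T/d\P'_T) = \sum_{t=1}^T \mathbf{1}\{A_t=a\} \log \tfrac{dP_a}{dP_a'}(X_t)$ (a Wald-type identity) combined with data processing. Take $\mathcal{E} = \{N_a(T) < T/2\}$. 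Under $\phi$, $T$-consistency of $\pi$ on the suboptimal arm $a$ gives $\E[N_a(T)] = o(T^\alpha)$ for every $\alpha>0$, so $\P(\mathcal{E}) \to 1$ by Markov. Under $\phi'$ arm $a$ is optimal, so $T$-consistency on $\phi'$ forces $T - \E'[N_a(T)] = o(T^\alpha)$, whence $\P'(\mathcal{E}) = o(T^{\alpha-1})$. The elementary bound $d(p\|q) \ge p \log(1/q) - \log 2$ then yields $d(\P(\mathcal{E})\|\P'(\mathcal{E})) \ge (1-\alpha)\log T - \O(1)$; dividing through and letting $\alpha,\epsilon \to 0$ delivers the per-arm bound.

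The main obstacle is producing the perturbation $P_a'$ over the full subGaussian class: in that generality the sharp rate is really $K_{\inf}(P_a, \mu_\ast) = \inf\{\KL(P_a\|P') : P'\text{ is subGaussian with mean exceeding }\mu_\ast\}$, and this only coincides with $\KL(P_a\|P_\ast)$ when $P_a$ and $P_\ast$ belong to a common one-parameter family (Gaussians with matched variance, Bernoulli, etc.). I would therefore either restrict to such a parametric setting or restate the bound with $K_{\inf}$ replacing $\KL(P_a\|P_\ast)$; the remainder of the argument is unchanged. A secondary nuisance is that $T$-consistency is an asymptotic property, so the $o(T^\alpha)$ slacks propagate into an $\alpha$-dependent loss that must be shown to be absorbable in the $\liminf$.
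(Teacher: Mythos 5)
The paper offers no proof of this theorem: it is stated as a citation to Lai \& Robbins (1985) and is used only as an input to the subsequent $\lambda$-regret lower bound. Your reconstruction is therefore not competing with an in-paper argument; judged on its own, it is the standard modern change-of-measure proof (divergence decomposition plus data processing applied to the event $\{N_a(T) < T/2\}$, with consistency supplying $\P(\mathcal{E})\to 1$ under $\phi$ and $\P'(\mathcal{E}) = o(T^{\alpha-1})$ under the perturbed instance), and the chain of estimates you give is correct. Three of your observations are in fact corrections to the statement as printed, and are worth making explicit: (i) the right-hand side should carry the weights $\Delta_a$, i.e.\ read $\sum_{a:\Delta_a>0} \Delta_a/\KL(P_a\|P_\ast)$, since the per-arm bound controls $\E[N_a(T)]$ and $R_T = \sum_a \Delta_a \E[N_a(T)]$ --- the paper's own restatement in its related-work appendix includes these weights; (ii) the theorem is false without a consistency hypothesis on the policy (a policy hard-wired to the optimal arm of $\phi$ has zero regret), and the paper omits this hypothesis here although it defines $T$-consistency immediately above; (iii) over the paper's actual environment class (general subGaussian rewards) the correct rate is $K_{\inf}(P_a,\mu_\ast)$ rather than $\KL(P_a\|P_\ast)$, and your proposed perturbation $P_a'$ only certifies the stated constant within a one-parameter family containing $P_a$ and $P_\ast$. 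Since you flag (iii) and would restrict or restate accordingly, your plan is sound; the only thing I would insist on is stating the consistency assumption in the theorem rather than leaving it implicit, because the downstream $\lambda$-regret corollary inherits it.
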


\begin{theorem}
    \begin{equation}
        \liminf_{\lambda \rightarrow 0^+} \frac {R^\lambda} {\log(1/\lambda)} \ge \sum_{a : \Delta_a > 0} \frac {1}{\KL(P_a \| P_\ast)}
    \end{equation}
\end{theorem}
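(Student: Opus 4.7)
The plan is to transfer the Lai–Robbins lower bound from the $T$-regret to the $\lambda$-regret through the identity
\begin{equation*}
    R^\lambda = (1 - e^{-\lambda}) \sum_{n=1}^{\infty} e^{-n\lambda} R_n,
\end{equation*}
which was already recorded in Section~\ref{sec:regret}. Let $C \triangleq \sum_{a : \Delta_a > 0} 1/\KL(P_a\|P_\ast)$. By Theorem~\ref{thm:lai-robbins}, for every $\epsilon > 0$ there exists $N_0$ such that $R_n \ge (C - \epsilon)\log n$ for all $n \ge N_0$. Since the $R_n$ are nonnegative, plugging this into the identity yields
\begin{equation*}
    R^\lambda \;\ge\; (C - \epsilon)(1 - e^{-\lambda}) \sum_{n=N_0}^{\infty} e^{-n\lambda} \log n.
\end{equation*}

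The remaining task is purely analytic: show that the right-hand side, divided by $\log(1/\lambda)$, tends to $C - \epsilon$ as $\lambda \to 0^+$. Equivalently, I need the Abelian-type estimate
\begin{equation*}
    (1 - e^{-\lambda}) \sum_{n=1}^{\infty} e^{-n\lambda} \log n \;\sim\; \log(1/\lambda) \quad \text{as } \lambda \to 0^+.
\end{equation*}
To prove this I would first note $(1-e^{-\lambda})/\lambda \to 1$, reducing the problem to $\lambda \sum_{n\ge 1} e^{-n\lambda} \log n \to \log(1/\lambda)$ (in ratio). Then I would bracket the sum by the integral $\int_{0}^{\infty} e^{-x\lambda}\log x\, dx$ via monotonicity on $[1,\infty)$, and evaluate the integral by the change of variable $u = \lambda x$:
\begin{equation*}
    \int_{0}^{\infty} e^{-x\lambda}\log x\, dx \;=\; \frac{1}{\lambda}\Bigl[\log(1/\lambda) + \int_{0}^{\infty} e^{-u}\log u\, du\Bigr] \;=\; \frac{\log(1/\lambda) - \gamma}{\lambda},
\end{equation*}
so after multiplying by $\lambda$ the dominant term is $\log(1/\lambda)$, and the Euler–Mascheroni constant $\gamma$ contributes only an additive $O(1)$ that is absorbed into $o(\log(1/\lambda))$. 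The truncation error from dropping the terms $n < N_0$ contributes only $O(\lambda)$ and is negligible.

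Putting the pieces together gives
\begin{equation*}
    \liminf_{\lambda \to 0^+} \frac{R^\lambda}{\log(1/\lambda)} \;\ge\; C - \epsilon,
\end{equation*}
and sending $\epsilon \to 0$ yields the claim. The main obstacle is the Abelian estimate for $\sum_{n\ge 1} e^{-n\lambda}\log n$; once that asymptotic is in hand, the rest of the argument is bookkeeping. An alternative, if one wants to avoid the explicit integral computation, is to cite the standard Karamata/Abelian theorem that for a slowly varying $L$, $\sum_{n} e^{-n\lambda} L(n) \sim L(1/\lambda)/\lambda$ as $\lambda \to 0^+$, applied with $L(n) = \log n$.
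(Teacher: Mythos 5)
Your argument is correct, but it takes a genuinely different route from the paper. The paper's proof is a two-line truncation: from the first inequality of Eq.~\eqref{eq:RTRl} it extracts $R^\lambda \ge e^{-\lambda T} R_T$, sets $T = \lfloor \epsilon/\lambda \rfloor$ so that $R^\lambda \ge e^{-\epsilon} R_{\lfloor \epsilon/\lambda \rfloor}$, notes $\log(1/\lambda) = \log\lfloor\epsilon/\lambda\rfloor - \log\epsilon + o(1)$, and invokes Theorem~\ref{thm:lai-robbins} directly; no summation asymptotics are needed. You instead push the lower bound $R_n \ge (C-\epsilon)\log n$ through the full Abel-summation identity $R^\lambda = (1-e^{-\lambda})\sum_n e^{-n\lambda}R_n$ and prove an Abelian estimate for $(1-e^{-\lambda})\sum_n e^{-n\lambda}\log n$. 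Both are sound. The paper's truncation is more elementary and exploits that the increments $\mu_\ast - \mu_{A_t}$ are nonnegative, so discarding the tail $t > T$ only decreases $R^\lambda$; your route is heavier but more informative, since it yields the two-sided asymptotic $(1-e^{-\lambda})\sum_n e^{-n\lambda}\log n = \log(1/\lambda) - \gamma + o(1)$ and would transfer an exact Lai--Robbins asymptotic (not just the liminf) to the $\lambda$-regret, which the crude bound $R^\lambda \ge e^{-\epsilon}R_{\lfloor\epsilon/\lambda\rfloor}$ cannot do without the $\epsilon\to 0$ limit. One small repair in your write-up: $x \mapsto e^{-\lambda x}\log x$ is not monotone on $[1,\infty)$ (it increases up to $x^\ast$ solving $x\log x = 1/\lambda$ and decreases afterward), so the sum--integral bracketing should be justified by the standard unimodal comparison, whose error is at most twice the maximum of the summand, i.e.\ $O(\log(1/\lambda))$ --- negligible against the main term $\log(1/\lambda)/\lambda$ after multiplying by $1-e^{-\lambda}$.
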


\begin{proof}
    The first inequality of Eq. \eqref{eq:RTRl} gives us
    \begin{equation}
        R^\lambda \ge e^{-\epsilon} R_{\lfloor \epsilon/\lambda \rfloor}, \quad \epsilon > 0.
    \end{equation}
    Divide $\log (1/\lambda)$, and take liminf on both sides,
    \begin{equation}
        \begin{aligned}
            & \liminf_{\lambda \rightarrow 0^+} \frac {R^\lambda}{\log(1/\lambda)} \\
            \ge &\liminf_{\lambda \rightarrow 0^+} \frac {R_{\lfloor \epsilon/\lambda \rfloor}}{\log(1/\lambda)} \\
            = & \liminf_{T \rightarrow \infty} \frac {R_{T}}{\log T -\log\epsilon} \\
            = & \liminf_{T \rightarrow \infty} \frac {R_{T}}{\log T }.
        \end{aligned}
    \end{equation}
    The rest follows by Theorem \ref{thm:lai-robbins}.
\end{proof}

Albeit the equivalence of consistency, algorithms that minimize $T$-regret and $\lambda$-regret have completely different characteristics. A $T$-regret minimization algorithms must not select sub-optimal arms too often, regardless of when and how they are selected. On the contrary, a $\lambda$-regret minimization algorithms can select sub-optimal arms as many times as they want -- as long as not too early. As a consequence, the $\lambda$-regret, focusing on not selecting sub-optimal arms too early, might be a better measure of performance if the interest is not too far-sighted that any cost in the early state can be completely omitted. It is only when $T\rightarrow \infty$ and $\lambda \rightarrow 0$ that these two notions eventually be one.

\textbf{Open problem.} In the field of Reinforcemnet Learning \citep{konda_actor-critic_1999,mnih_playing_2013,lillicrap_continuous_2015}, discounted return is used as a measure of performance, where the discounted ratio is considered fixed. \citep{jin2018q} studied Q-learning in the regret minimization regime. We have shown that UCB$^\tau$ minimizes the $\lambda$-regret for $\lambda$ near $0$. It is natural to ask the following question: 
\begin{quote}
    \textit{Is is possible to design RL algorithms so as to minimize the discounted regret near $0$?}    
\end{quote}

\section{Risk Analysis} \label{sec: risk}

\subsection{High probability bound}

\begin{theorem} \label{thm:highprob}
    For $u \ge \frac {\alpha_a \log T}{(\Delta_a - \varepsilon - \varepsilon_2)^{-1/\tau} }$,
    \begin{equation}
        \P(N_a(T) \ge u) \le T\exp(-\frac {u \varepsilon_2^2}{2\sigma_a^2}) + \frac {2\sigma_\ast^2}{\delta^2} u^{\frac {\beta(\varepsilon - \delta, k) -\alpha_\ast}{\beta(\varepsilon-\delta, k)}}.
    \end{equation}

\end{theorem}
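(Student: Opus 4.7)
The plan follows the standard Auer--Audibert decomposition used to prove Theorem~\ref{them:little}, adapted from an expectation bound to a tail bound. If $N_a(T)\ge u$ then there is a (random) round $t\le T$, necessarily with $t\ge u$, at which arm $a$ receives its $u$-th pull, so $A_t = a$ and $N_a(t-1)\ge u-1$. The hypothesis on $u$ forces the bonus of arm $a$ at this round to be at most $\Delta_a - \varepsilon - \varepsilon_2$, so writing out $I_a(t)\ge I_\ast(t)$ and rearranging gives
\begin{equation*}
\bigl(\hat\mu_a(\mathcal{N}_a(t-1)) - \mu_a\bigr) - \varepsilon_2 \;\ge\; \bigl(I_\ast(t) - \mu_\ast\bigr) + \varepsilon,
\end{equation*}
so at least one of $E_1 = \{\hat\mu_a(\mathcal{N}_a(t-1)) \ge \mu_a + \varepsilon_2\}$ or $E_2 = \{I_\ast(t) \le \mu_\ast - \varepsilon\}$ must occur, and therefore $\P(N_a(T)\ge u) \le \P(E_1) + \P(E_2)$.

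The first term $\P(E_1)$ is the overestimation of arm $a$: conditioning on $N_a(t-1) = n \ge u-1$, $\sigma_a$-subGaussianity together with Hoeffding give $\exp(-n\varepsilon_2^2/(2\sigma_a^2))$, and a crude union bound over $t\le T$ produces the exponential term $T\exp(-u\varepsilon_2^2/(2\sigma_a^2))$.

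The second term $\P(E_2)$ is the technical core. I would bound it by
\begin{equation*}
\P(E_2) \;\le\; \sum_{t\ge u}\sum_{n\ge 1} \P\bigl(\hat\mu_\ast(n) \le \mu_\ast - \varepsilon - (\alpha_\ast \log t/n)^\tau\bigr),
\end{equation*}
apply Hoeffding to each summand to get $\exp(-n(\varepsilon + (\alpha_\ast \log t/n)^\tau)^2/(2\sigma_\ast^2))$, and split the inner sum at the threshold $n_\delta := \alpha_\ast \log t / \delta^{1/\tau}$, where the bonus of the optimal arm drops to $\delta$. Choosing the peeling so that it is dominated by the optimizer $n^\ast \asymp \alpha_\ast \log t/(\varepsilon - \delta)^{1/\tau}$ of the Hoeffding exponent absorbs the slack between peeling points by shifting $\varepsilon$ to $\varepsilon - \delta$, producing the uniform single-$t$ bound $t^{-\alpha_\ast/\beta(\varepsilon-\delta,k)}$. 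Comparing $\sum_{t\ge u} t^{-\alpha_\ast/\beta(\varepsilon-\delta,k)}$ to an integral then yields $u^{(\beta(\varepsilon-\delta,k)-\alpha_\ast)/\beta(\varepsilon-\delta,k)}$, while the geometric series in $n$ contributes the prefactor $2\sigma_\ast^2/\delta^2$.

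The main obstacle is the coupling of $n$ with $t$ inside the Hoeffding exponent for $\P(E_2)$: a naive per-$n$ application of Hoeffding does not directly admit a common polynomial decay in $t$, so the peeling buckets must be arranged so that each bucket meets the optimizer $n^\ast$ cleanly and the minimum across buckets matches the definition of $\beta(\varepsilon-\delta,k)$. The slack $\delta$ plays a dual role -- it sets the peeling threshold $n_\delta$ and is precisely what guarantees the strict inequality $\alpha_\ast > \beta(\varepsilon-\delta,k)$ needed for the polynomial $u^{1-\alpha_\ast/\beta(\varepsilon-\delta,k)}$ to genuinely decay. The remaining ingredients (Hoeffding, union bound, integral comparison of series) are routine.
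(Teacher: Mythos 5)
Your proposal follows essentially the same route as the paper: decompose $\{N_a(T)\ge u\}$ at the round of the $u$-th pull so that the hypothesis on $u$ caps the bonus of arm $a$, split into an overestimation event for arm $a$ (Hoeffding plus a union bound over $t$, giving $T\exp(-u\varepsilon_2^2/(2\sigma_a^2))$) and an underestimation event for the optimal arm (the double sum over $t\ge u$ and $n$, decoupled via the $\delta$-split into a $t^{-\alpha_\ast/\beta(\varepsilon-\delta,k)}$ factor and a geometric series contributing $2\sigma_\ast^2/\delta^2$, then an integral comparison over $t\ge u$). The only cosmetic difference is that you phrase the decoupling as a peeling around the optimizer $n^\ast$, where the paper invokes its Butterfly inequality (Lemma \ref{lemma: butterfly}) to reach the same exponent $\alpha_\ast\log t/\beta(\varepsilon-\delta,k)$.
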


\subsection{The price of under-exploration} 
Theorem \ref{them:little} has shown that, given exploration exponent $\tau\ge 1/2$, the UCB$^\tau$ algorithm has provable regret bound $O(\log T)$ if the exploration rate $\alpha$ surpasses certain threshold. Here we analyze what happens when the exploration rate falls below the threshold. Under exploration aggravates the bifurcation phenomenon, which can be seen from Lemma \ref{lemma:nta2} as the series non-convergent. So instead of getting a zeta function, we get a non-convergent partial sum with certain growth rate. Theorem \ref{thm:under-explore} provide an upper bound that is, unfortunately, $\widetilde{O}(T^{1 - \frac { \alpha_{\ast} }{\beta_{a}}})$. This shows that the influence of under-exploration is not severe provided that the amount of under-exploration is small and $T$ is only of moderate size.
\begin{theorem} \label{thm:under-explore}
    When $\alpha_{\ast} \le \beta_{a}(\tau)$ for some sub-optimal $a$, 
\begin{equation} \label{eq:r1-fail}
    \E[N_{a}^{(2)}(T, \varepsilon)] \le \frac {2\sigma_\ast^2}{\delta^2} \left( 1 +  T^{1 - \frac { \alpha_{\ast} }{\beta_{a}(\tau)}} \log T \right)
\end{equation}
\end{theorem}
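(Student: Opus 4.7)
The plan is to reuse the decomposition $N_a(T) = N_a^{(1)}(T, \varepsilon) + N_a^{(2)}(T, \varepsilon)$ that underlies Theorem \ref{them:little}, but re-examine the second piece in the opposite regime $\alpha_{\ast} \le \beta_a(\tau)$. Recall that $N_a^{(2)}$ counts rounds on which the UCB index of the optimal arm is pushed below $\mu_\ast - \varepsilon$, which is precisely the event that under-exploration threatens. The object to bound is
\begin{equation}
\sum_{t=1}^{T} \P\!\left(\exists\, 1 \le n \le t : \hat{\mu}_\ast(\mathcal{H}_\ast(n)) + \bigl(\tfrac{\alpha_\ast \log t}{n}\bigr)^\tau \le \mu_\ast - \delta \right),
\end{equation}
with $\delta = \delta(\varepsilon) > 0$ absorbing the slack from the gap-splitting step. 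When $\alpha_\ast > \beta_a(\tau)$ each summand decays faster than $t^{-1}$ and Lemma \ref{lemma:nta2} produces a convergent $\O(1)$ total; when $\alpha_\ast \le \beta_a(\tau)$ the per-round decay slows to $t^{-\alpha_\ast/\beta_a(\tau)}$ with exponent below $1$, so the partial sum grows polynomially in $T$ in a controlled manner.

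\textbf{Key steps.} First, apply subGaussian concentration for each pair $(t, n)$ to obtain
\begin{equation}
\P\!\left(\hat{\mu}_\ast(n) - \mu_\ast \le -\bigl(\tfrac{\alpha_\ast \log t}{n}\bigr)^\tau - \delta \right) \le \exp\!\left( -\tfrac{n\,(\,(\alpha_\ast \log t/n)^\tau + \delta)^2}{2\sigma_\ast^2} \right),
\end{equation}
and expand the square to extract the $2c\delta$ cross term, which is what later produces the $1/\delta^2$ prefactor after a geometric sum in $n$. Second, rather than union-bounding naively over $n \le t$ (which would cost a wasted factor of $t$), invoke Lemma \ref{lemma:nta2}, which carries out a dyadic peeling over $[1, t]$: the critical scale $n_\star \asymp \alpha_\ast \log t / \Delta_a^{1/\tau}$ is precisely where the exponent is minimized, and the form of $\beta_a(\tau)$ in Eq.~\eqref{eq:beta-a} is crafted so that the minimum of $n \cdot (\alpha_\ast \log t / n)^{2\tau}/(2\sigma_\ast^2)$ equals $(\alpha_\ast/\beta_a(\tau))\log t$. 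This yields a per-round bound of $(2\sigma_\ast^2/\delta^2)\, t^{-\alpha_\ast/\beta_a(\tau)}$ up to a logarithmic overhead. Finally, summing the polynomial tail,
\begin{equation}
\sum_{t=1}^T t^{-\alpha_\ast/\beta_a(\tau)} \le 1 + \frac{T^{\,1 - \alpha_\ast/\beta_a(\tau)}}{1 - \alpha_\ast/\beta_a(\tau)},
\end{equation}
and absorbing the residual $\log T$ from the peeling step gives the claimed bound \eqref{eq:r1-fail}.

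\textbf{Main obstacle.} The genuinely delicate step is the peeling over $n$. For $\tau > 1/2$ the bare exponent $n \cdot (\alpha_\ast \log t / n)^{2\tau} = \alpha_\ast^{2\tau}(\log t)^{2\tau} n^{1 - 2\tau}$ is monotonically decreasing in $n$, so without the $\delta$ correction the worst $n$ sits at the boundary $n = t$ and the bound becomes vacuous. Once the $\delta$ slack is incorporated, the optimizing $n$ moves to an interior point, and extracting the sharp exponent $\alpha_\ast/\beta_a(\tau)$ requires a Laplace-style estimate whose first-order condition matches the definition of $\beta_a(\tau)$. Keeping the peeling overhead to a \emph{single} additional $\log T$ factor --- rather than accumulating multiple logs through the dyadic blocks --- is the main technical nuisance, and is the reason the constant in Eq.~\eqref{eq:beta-a} takes the particular form $2(1/2\tau)^{1/\tau}$.
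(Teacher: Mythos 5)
Your top-level route is exactly the paper's: the proof of Theorem \ref{thm:under-explore} is literally ``rerun Lemma \ref{lemma:nta2} in the regime $\alpha_\ast \le \beta_a(\tau)$, where the zeta series no longer converges, and bound the resulting partial sum $\sum_{t\le T} t^{-\alpha_\ast/\beta_a(\tau)}$.'' You have correctly identified that the only thing that changes under under-exploration is the convergence of that series, and your final expression matches Eq.~\eqref{eq:r1-fail}. Two of your intermediate mechanisms, however, do not match how the argument actually closes, and one of them is a genuine gap.

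First, your description of Lemma \ref{lemma:nta2} as a ``dyadic peeling'' / Laplace-point argument is invented: the lemma \emph{does} take the naive union bound over $n \le t-1$, and the factor of $t$ is not wasted because the Butterfly inequality (Lemma \ref{lemma: butterfly}) with $k = 1/(2\tau)$ splits the exponent into a term $n\,c_{t,n,\ast}^{2k}/\beta(\varepsilon-\delta,k) = \alpha_\ast \log t/\beta(\varepsilon-\delta,k)$ that is \emph{constant in $n$} (this is what the exponent $2k\tau=1$ is designed for) and a term $n\delta^2/(2\sigma_\ast^2)$ whose geometric sum over $n$ yields the $2\sigma_\ast^2/\delta^2$ prefactor. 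No peeling and no extra logarithm arise at this stage; your Laplace first-order condition is a correct explanation of \emph{why} the Butterfly split is tight, but as a proof step the peeling is a placeholder you never carry out. Second, and more importantly, your final summation $\sum_{t=1}^T t^{-s} \le 1 + T^{1-s}/(1-s)$ is not the bound the theorem needs: the constant $1/(1-s)$ diverges as $s = \alpha_\ast/\beta_a(\tau) \to 1$, which is precisely the ``slight under-exploration'' regime the theorem is meant to cover (the hypothesis allows $\alpha_\ast = \beta_a(\tau)$, i.e.\ $s=1$, where your bound is vacuous). The paper instead invokes Lemma \ref{lemma:zeta2}, $\zeta_T(s) \le 1 + T^{1-s}\log T$, which is uniform over all $s \in [0,1]$; that is where the $\log T$ in Eq.~\eqref{eq:r1-fail} comes from, not from any peeling overhead. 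Replacing your tail bound with Lemma \ref{lemma:zeta2} and dropping the peeling story turns your sketch into the paper's proof.
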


\begin{proof}
    Reminiscent of Lemma \ref{lemma:nta2} and then use Lemma \ref{lemma:zeta2}.    
\end{proof}

\section{Discussion}\label{sec:discussion}

\begin{table}[H]
\small
\setlength{\tabcolsep}{4pt}
\begin{tabular}{l|cccc}
    & Performance & Risk & Tunable & Minimax Regret\\
        \hline
    $\tau=\frac 12$             & \ding{56} & \ding{56} & $\Phi$  & $\tilde{\O}(\sqrt{KT})$ \\
    $\frac 12 < \tau < 1$ &  \ding{52} & \ding{52} & $\Psi^{1 - \frac {1}{2\tau}}$ & $\tilde{\O}(T^{1 - \tau} K^{\tau})$ \\
    $ 1 \le \tau < \infty$ &  \ding{52} & \ding{52} & $\Psi^{1 - \frac {1}{2\tau}}$ &  \\
    $\tau = \infty$            &  \ding{52} & \ding{52} & $\Psi$  \\
\end{tabular}
\caption{Summary of the merits of UCB$^\tau$}
\label{tab:summary}
\end{table}




\textbf{ETC = UCB$^\infty$.\quad} We have seen when $\tau > 1/2$, tuning $\alpha_a$ requires prior knowledge on $\Psi(\gamma)$, which represents difficulty of the task. When $\tau = \infty$, it depends on $\Psi(\gamma)$ along. In fact, UCB$^\infty$ is an interesting case as we now elaborate. As $\tau$ increases, the bonus decays to zero for $N_a(t)$ large enough. One might suspect that UCB$^\tau$ reduces to the Greedy algorithm (zero extra bonus). However, it is a little more delicate. 

Let's look at the bonus of UCB$^\tau$ for $\alpha_a = \frac {(2+\delta)\sigma_\ast^2}{\Delta_a^{2}} = \beta_a(\infty)$.
\begin{equation}
    I_a(t) = 
    \left\{ 
    \begin{aligned}
        \hat{\mu}_a(N_a(t-1)),\quad & \text{if }N_a(t-1) \ge \frac {(2+\delta)\sigma_\ast^2 \log t}{\Delta_a^2}, \\
        \infty ,\quad & \text{otherwise.}
    \end{aligned} 
    \right. 
\end{equation}
Thus, UCB$^\tau$ switches between forced sampling and greedy policy. At each step, each arm should collect at least $\O(\log t)$ samples, otherwise will be sampled by force. Ties in $\infty$ can be broken either at random or in order. 

Explore-Then-Commit \citep{garivier2016explore} is another algorithm that combines forced sampling and greedy. It first samples each arm $m$ times, then chooses the arm with the largest empirical mean all the way to the end (without further updating). They also gave a fully sequential algorithm for two-arm problem, which was essentially UCB$^\infty$ here, only we consider $K$ arms rather than just two.

The tuning of ETC \citep{lattimore2020bandit} is given by 
\begin{equation}
    m = \max \{1, \left\lceil  \frac {4\sigma^2}{\Delta^2}\log \left( \frac {\sigma^2 T}{4\Delta^2} \right) \right\rceil \}
\end{equation}
We see that the sampling ratio of UCB$^\infty$ matches the $m$ in ETC in terms of $\sigma, \Delta$ and $\log t$. Hence we can view UCB$^\infty$ as an \textit{anytime} (independent of horizon $T$) version of ETC. 

\textbf{Bifurcation of Greedy Algorithm.\quad } It is well-known that greedy algorithm fails in many scenarios. However, in some cases greedy performs arbitrarily well. 

\begin{example} \label{ex: greedy-good}
    When $\sigma_{\ast} = 0$, the greedy algorithm has regret
    \begin{equation}
        R_T \le \sum_{a\in[K]:\Delta_a > 0} (\Delta_a + \frac {2\sigma_a^2}{\Delta_a})
    \end{equation}
\end{example} 

\begin{example} \label{ex: greedy-bad}
    $K=2, A_\ast =1, \sigma_1> 0, \sigma_2 = 0$. Rewards are Gaussian. The greedy algorithm has regret
    \begin{equation}
        R_T \ge \P(X_{1, 1} < \mu_2) (T-1)
    \end{equation}
\end{example}
This bifurcation proves that the bonus cannot be reduced to zero.

\section{Experiment}\label{sec:exp}

\begin{figure*}[ht]
  \centering
  \includegraphics[width=1\textwidth]{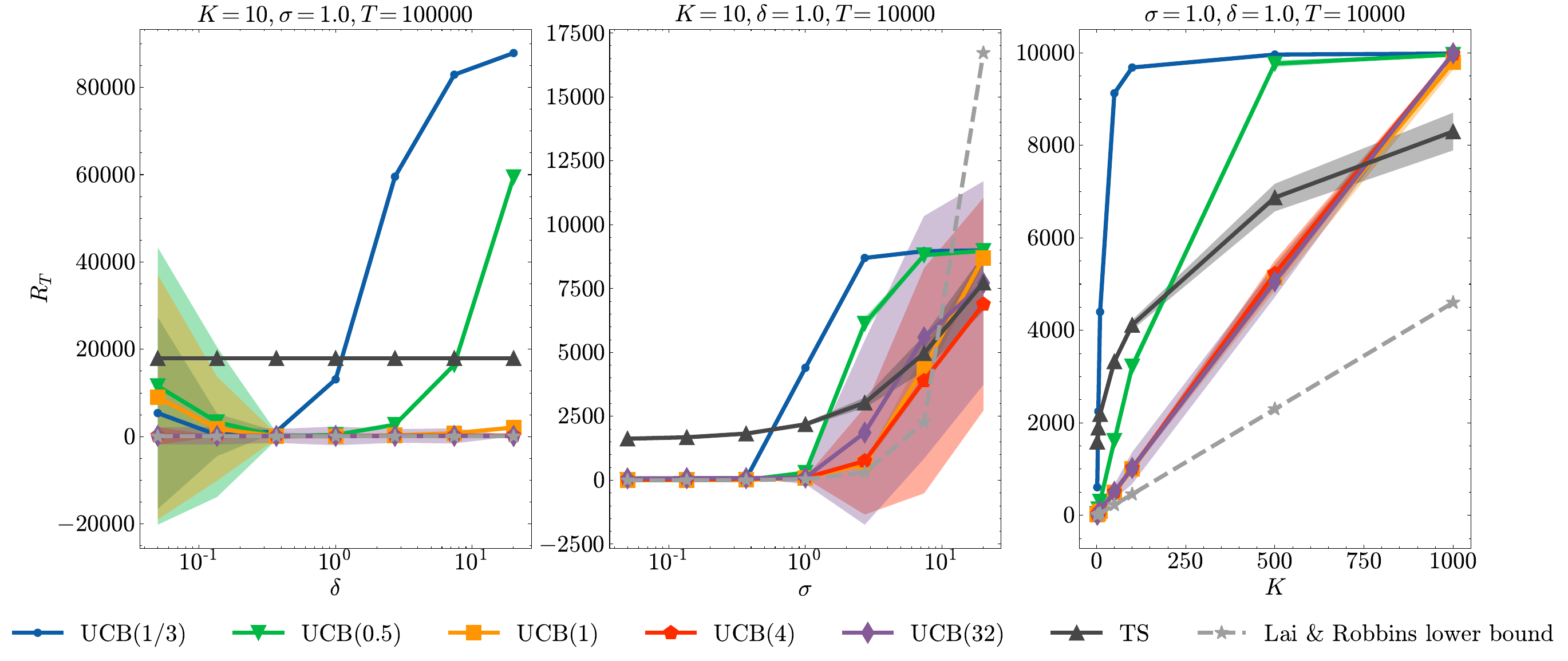}
  \caption{Regret in Various Configurations of the Algorithm in Gaussian Reward Scenarios, with a Focus on the Regret Mean and Standard Deviation (Shaded Region) at T=10,000 over 4,096 Repetitions. The left graph represents the scenario when $\delta$ changes while other factors remain constant. The middle graph illustrates the case when $\sigma$ varies while other factors remain unchanged. The rightmost graph depicts the scenario when the number of arms changes while other factors remain constant.}
  \label{fig:gaussian_threeinone_T=10000}
\end{figure*}

To further substantiate our theoretical findings, we conducted a comprehensive numerical simulation with enriched configurations on a stochastic multi-armed bandit problem, featuring Gaussian and Bernoulli rewards (Due to space constraints, we have included additional experimental details in the appendix \ref{sec: Rest of Experiments}.). We juxtaposed the UCB$^\tau$ algorithm with $\tau = \frac 13, \frac 12, 1,2,4,32$ against the widely employed Thompson sampling algorithm and the $\epsilon-$greedy algorithm. On the whole, the results reveal a notable advantage for the UCB$^\tau$ algorithm when the parameter $\tau > \frac 12$.

Our experiments were executed within a hyperparameter grid. Herein, we enumerate a subset of the settings:
\begin{enumerate}
    \item Number of arms. $K=2,5,10,50,100, 500, 1000$.
    \item Noise-gap ratio $\gamma \triangleq \frac {\sigma}{\Delta}$. $\sigma \in (e^{-3}, e^3)$ and $\Delta=1$.
    \item Exploration mass. $\delta \in (e^{-3}, e^3)$.
    \item Number of Repetitions: 4096.
    \item Duration of Algorithm: [1, 100,000].
\end{enumerate}

First, we test the tightness of the constant $\beta_a(\tau)$ by setting $\alpha_a = \delta \cdot \beta_a(\tau)$ where $\delta \in [e^{-2}, e^2]$ . When $\delta>1$, the regret is low and the variance of regret is also low. When $\delta < 1$, the regret and variance become high. See Fig. \ref{fig:gaussian_threeinone_T=10000} (left) for results. This shows that $\beta_a(\tau)$ is the exact best-match of hyperparameter.
Regardless of how $\delta$ varies, when $\tau>\frac{1}{2}$, the algorithm UCB$^\tau$ consistently outperforms the original UCB and TS algorithms, approaching the Lai \& Robbins lower bound.

Second, we compare the performance under different noise-gap ratios, results in Fig. \ref{fig:gaussian_threeinone_T=10000} (middle). As $\gamma$ increases, all algorithms suffer not only larger regret but also larger variance. Eventually, every algorithm fails. Even when $\gamma$ is high, UCB$^\tau$ still outperforms standard UCB, and comparable to TS. On the contrary, when $\gamma$ is small, TS performs poorly, while both standard UCB and UCB$^\tau$ has good performance.

Third, we compare the performance under different arms, results in Fig. \ref{fig:gaussian_threeinone_T=10000} (right). When $K$ is large, UCB fails tragically. One possible explanation is that the cost of injected bias is emplified when $K$ is large. From $K=2$ to $K=1000$, the result shows that UCB$^\tau$ out performs standard UCB and TS. As $K$ grows, the advantage is larger.

 \newpage
\bibliographystyle{agsm}
\bibliography{references}

@article{lai1985asymptotically,
  title={Asymptotically efficient adaptive allocation rules},
  author={Lai, Tze Leung and Robbins, Herbert},
  journal={Advances in applied mathematics},
  volume={6},
  number={1},
  pages={4--22},
  year={1985}
}

@article{auer2002finite,
  title={Finite-time analysis of the multiarmed bandit problem},
  author={Auer, Peter and Cesa-Bianchi, Nicolo and Fischer, Paul},
  journal={Machine learning},
  volume={47},
  number={2},
  pages={235--256},
  year={2002},
  publisher={Springer}
}

@book{lattimore2020bandit,
  title={Bandit algorithms},
  author={Lattimore, Tor and Szepesv{\'a}ri, Csaba},
  year={2020},
  publisher={Cambridge University Press}
}

@article{bayati2020unreasonable,
  title={Unreasonable effectiveness of greedy algorithms in multi-armed bandit with many arms},
  author={Bayati, Mohsen and Hamidi, Nima and Johari, Ramesh and Khosravi, Khashayar},
  journal={Advances in Neural Information Processing Systems},
  volume={33},
  pages={1713--1723},
  year={2020}
}

@article{russo2014learning,
  title={Learning to optimize via posterior sampling},
  author={Russo, Daniel and Van Roy, Benjamin},
  journal={Mathematics of Operations Research},
  volume={39},
  number={4},
  pages={1221--1243},
  year={2014},
  publisher={INFORMS}
}

@article{audibert2009exploration,
  title={Exploration--exploitation tradeoff using variance estimates in multi-armed bandits},
  author={Audibert, Jean-Yves and Munos, R{\'e}mi and Szepesv{\'a}ri, Csaba},
  journal={Theoretical Computer Science},
  volume={410},
  number={19},
  pages={1876--1902},
  year={2009},
  publisher={Elsevier}
}

@article{Auer:10,
  title={UCB revisited: Improved regret bounds for the stochastic multi-armed bandit problem},
  author={Auer, Peter and Ortner, Ronald},
  journal={Periodica Mathematica Hungarica},
  volume={61},
  pages={55--65},
  year={2010}
}

@article{lattimore2018refining,
  title={Refining the confidence level for optimistic bandit strategies},
  author={Lattimore, Tor},
  journal={The Journal of Machine Learning Research},
  volume={19},
  number={1},
  pages={765--796},
  year={2018},
  publisher={JMLR. org}
}

@inproceedings{garivier2011kl,
  title={The KL-UCB algorithm for bounded stochastic bandits and beyond},
  author={Garivier, Aur{\'e}lien and Capp{\'e}, Olivier},
  booktitle={Proceedings of the 24th annual conference on learning theory},
  pages={359--376},
  year={2011},
  organization={JMLR Workshop and Conference Proceedings}
}

@article{garivier2022kl,
  title={KL-UCB-switch: optimal regret bounds for stochastic bandits from both a distribution-dependent and a distribution-free viewpoints},
  author={Garivier, Aur{\'e}lien and Hadiji, H{\'e}di and Menard, Pierre and Stoltz, Gilles},
  journal={The Journal of Machine Learning Research},
  volume={23},
  number={1},
  pages={8049--8114},
  year={2022},
  publisher={JMLRORG}
}

@article{cappe2013kullback,
  title={Kullback-Leibler upper confidence bounds for optimal sequential allocation},
  author={Capp{\'e}, Olivier and Garivier, Aur{\'e}lien and Maillard, Odalric-Ambrym and Munos, R{\'e}mi and Stoltz, Gilles},
  journal={The Annals of Statistics},
  pages={1516--1541},
  year={2013},
  publisher={JSTOR}
}

@inproceedings{audibert2009minimax,
  title={Minimax Policies for Adversarial and Stochastic Bandits.},
  author={Audibert, Jean-Yves and Bubeck, S{\'e}bastien and others},
  booktitle={COLT},
  volume={7},
  pages={1--122},
  year={2009}
}

@article{gittins1979bandit,
  title={Bandit processes and dynamic allocation indices},
  author={Gittins, John C},
  journal={Journal of the Royal Statistical Society Series B: Statistical Methodology},
  volume={41},
  number={2},
  pages={148--164},
  year={1979},
  publisher={Oxford University Press}
}

@book{wainwright2019high,
  title={High-dimensional statistics: A non-asymptotic viewpoint},
  author={Wainwright, Martin J},
  volume={48},
  year={2019},
  publisher={Cambridge university press}
}

@article{abbasi2011improved,
  title={Improved algorithms for linear stochastic bandits},
  author={Abbasi-Yadkori, Yasin and P{\'a}l, D{\'a}vid and Szepesv{\'a}ri, Csaba},
  journal={Advances in neural information processing systems},
  volume={24},
  year={2011}
}

@inproceedings{azar2017minimax,
  title={Minimax regret bounds for reinforcement learning},
  author={Azar, Mohammad Gheshlaghi and Osband, Ian and Munos, R{\'e}mi},
  booktitle={International Conference on Machine Learning},
  pages={263--272},
  year={2017},
  organization={PMLR}
}

@article{jin2018q,
  title={Is Q-learning provably efficient?},
  author={Jin, Chi and Allen-Zhu, Zeyuan and Bubeck, Sebastien and Jordan, Michael I},
  journal={Advances in neural information processing systems},
  volume={31},
  year={2018}
}

@article{garivier2016explore,
  title={On explore-then-commit strategies},
  author={Garivier, Aur{\'e}lien and Lattimore, Tor and Kaufmann, Emilie},
  journal={Advances in Neural Information Processing Systems},
  volume={29},
  year={2016}
}

@article{rakhlin2013optimization,
  title={Optimization, learning, and games with predictable sequences},
  author={Rakhlin, Sasha and Sridharan, Karthik},
  journal={Advances in Neural Information Processing Systems},
  volume={26},
  year={2013}
}

@article{daskalakis2021near,
  title={Near-optimal no-regret learning in general games},
  author={Daskalakis, Constantinos and Fishelson, Maxwell and Golowich, Noah},
  journal={Advances in Neural Information Processing Systems},
  volume={34},
  pages={27604--27616},
  year={2021}
}

@article{zimmert2021tsallis,
  title={Tsallis-inf: An optimal algorithm for stochastic and adversarial bandits},
  author={Zimmert, Julian and Seldin, Yevgeny},
  journal={The Journal of Machine Learning Research},
  volume={22},
  number={1},
  pages={1310--1358},
  year={2021},
  publisher={JMLRORG}
}

@article{gittins_dynamic_1979,
	title = {A dynamic allocation index for the discounted multiarmed bandit problem},
	volume = {66},
	issn = {0006-3444, 1464-3510},
        doi = {10.1093/biomet/66.3.561},
	language = {en},
	number = {3},
	urldate = {2023-10-15},
	journal = {Biometrika},
	author = {Gittins, J. C. and Jones, D. M.},
	year = {1979},
	pages = {561--565},
}

@misc{mnih_playing_2013,
	title = {Playing {Atari} with {Deep} {Reinforcement} {Learning}},
	doi = {10.48550/arXiv.1312.5602},
	abstract = {We present the first deep learning model to successfully learn control policies directly from high-dimensional sensory input using reinforcement learning. The model is a convolutional neural network, trained with a variant of Q-learning, whose input is raw pixels and whose output is a value function estimating future rewards. We apply our method to seven Atari 2600 games from the Arcade Learning Environment, with no adjustment of the architecture or learning algorithm. We find that it outperforms all previous approaches on six of the games and surpasses a human expert on three of them.},
	urldate = {2023-10-18},
	publisher = {arXiv},
	author = {Mnih, Volodymyr and Kavukcuoglu, Koray and Silver, David and Graves, Alex and Antonoglou, Ioannis and Wierstra, Daan and Riedmiller, Martin},
	month = dec,
	year = {2013},
	note = {arXiv:1312.5602 [cs]},
	keywords = {Computer Science - Machine Learning},
	annote = {Comment: NIPS Deep Learning Workshop 2013},
	file = {arXiv Fulltext PDF:files/129/Mnih 等 - 2013 - Playing Atari with Deep Reinforcement Learning.pdf:application/pdf;arXiv.org Snapshot:files/130/1312.html:text/html},
}

@inproceedings{konda_actor-critic_1999,
	title = {Actor-{Critic} {Algorithms}},
	volume = {12},
	abstract = {We  propose  and  analyze  a  class  of  actor-critic  algorithms  for  simulation-based  optimization  of  a  Markov  decision  process  over  a  parameterized  family  of randomized  stationary  policies.  These  are two-time-scale  algorithms in  which  the critic uses TD learning  with  a  linear approximation architecture and the actor is  updated  in  an  approximate  gradient  direction  based  on  information  pro(cid:173) vided by the critic.  We  show that the features for  the critic should  span a subspace prescribed by the choice of parameterization of the  actor.  We  conclude by discussing convergence properties and some  open problems.},
	urldate = {2023-10-18},
	booktitle = {Advances in {Neural} {Information} {Processing} {Systems}},
	publisher = {MIT Press},
	author = {Konda, Vijay and Tsitsiklis, John},
	year = {1999},
	file = {Full Text PDF:files/136/Konda 和 Tsitsiklis - 1999 - Actor-Critic Algorithms.pdf:application/pdf},
}

@misc{lillicrap_continuous_2015,
	title = {Continuous control with deep reinforcement learning},
	abstract = {We adapt the ideas underlying the success of Deep Q-Learning to the continuous action domain. We present an actor-critic, model-free algorithm based on the deterministic policy gradient that can operate over continuous action spaces. Using the same learning algorithm, network architecture and hyper-parameters, our algorithm robustly solves more than 20 simulated physics tasks, including classic problems such as cartpole swing-up, dexterous manipulation, legged locomotion and car driving. Our algorithm is able to find policies whose performance is competitive with those found by a planning algorithm with full access to the dynamics of the domain and its derivatives. We further demonstrate that for many of the tasks the algorithm can learn policies end-to-end: directly from raw pixel inputs.},
	language = {en},
	urldate = {2023-10-18},
	journal = {arXiv.org},
	author = {Lillicrap, Timothy P. and Hunt, Jonathan J. and Pritzel, Alexander and Heess, Nicolas and Erez, Tom and Tassa, Yuval and Silver, David and Wierstra, Daan},
	month = sep,
	year = {2015},
	file = {Full Text PDF:files/143/Lillicrap 等 - 2015 - Continuous control with deep reinforcement learnin.pdf:application/pdf},
}

\end{multicols}

\newpage

\appendix
\newtheorem{lemma2}{Lemma}[section]
\newtheorem{fact2}[lemma2]{Fact}


\section{Problem Settings and Notations} \label{sec:problem-setting}

Let 
\begin{equation}
    \mathcal{N}_{a}(t) \triangleq \{ n \in[t] : A_n = a\}, \quad N_{a}(t) \triangleq |\mathcal{N}_{a}(t)|
\end{equation}
denote the set and the number of steps where arm $a$ is pulled in the first $t$ rounds, respectively.

Let 
\begin{equation}
    H_a(n) \triangleq \min\{ t \in [T] : A_t = a, t > H_a(n-1) \}, \quad \mathcal{H}_a(n) \triangleq \{H_a(1), \ldots, H_a(n)\}
\end{equation}
denote the $n$-th hitting time and the first $n$ hits, respectively. If the $N_a(T) < n$, define $H_a(n) = \infty$. Define $H_a(0) = 0$.

For any subset $
\S \subset [T]$, define
\begin{equation}
    \hat{\mu}_{a}(\S) \triangleq \frac {1}{|\S|} \sum_{t\in\S} X_{t, a}
\end{equation}
as the average reward of arm $a$ over the steps in $\S$. 

To avoid double subscripts (as much as possible),  define
\begin{equation}
\begin{aligned}
 &\alpha_\ast \triangleq \alpha_{A_\ast}, \quad \sigma_\ast \triangleq \sigma_{A_\ast}, \quad N_{\ast}(\cdot) \triangleq N_{A_\ast}(\cdot), \quad I_\ast(\cdot) \triangleq I_{A_\ast}(\cdot) \\
 &\hat{\mu}_{\ast}(\cdot) \triangleq \hat{\mu}_{A_\ast}(\cdot), \quad \T_{\ast}(\cdot) \triangleq T_{A_\ast}(\cdot)  \quad c_{t, s, \ast} \triangleq c_{t, s, A_\ast}
\end{aligned}
\end{equation}
which stand for short-hand notations related to the optimal arm.

\section{Related Works}
\label{sec:related}

The multi-armed bandit was first solved in discounted setting, where the optimal solution is given by the so-called dynamic allocation index \citep{gittins1979bandit,gittins_dynamic_1979}. It turns out that the undiscounted case is much more subtle than it seems, since the excess loss might be unbounded under improper balance of exploration and exploitation. In the seminal paper, \citep{lai1985asymptotically} introduced Upper Confidence Bound (UCB) algorithm that achieves $\O(\log T)$ regret for tackling the stochastic multi-armed bandit problem. They also established a lower bound of regret $\Omega(\log T)$, that is, under mild assumptions, any bandit algorithm suffers
\begin{equation} \label{eq:lai-robbins}
    \liminf_{T\rightarrow \infty} \frac {R_T}{\log T} \ge \sum_{a\in[K] : \Delta_a > 0} \frac {\Delta_a}{\KL(P_\ast \| P_a)}.
\end{equation}
Since asymptotic regret bound is sometimes far from predictive of the practical performance, finite-time regret analysis is considered as more robust guarantees and becomes the standard requirement. \citep{auer2002finite,Auer:10} was the first to prove $\mathcal{O}(\log T)$ finite-time regret bound for the standard UCB algorithm. An algorithm that achieves equality in Eq. \eqref{eq:lai-robbins} is considered \textit{asymptotically optimal}. Improvements over standard UCB are made to achieve asymptotic optimality in the setting of bounded rewards \citep{garivier2011kl}, one-parameter exponential family \citep{cappe2013kullback}, uni-variance Gaussian \citep{lattimore2018refining}, among other assumptions of reward distribution. Refinement can also be made by computing more statistics, therefore reducing the prior knowledge injected at the beginning time. UCB-V \citep{audibert2009exploration} uses variance estimate as uncertainty quantifier to control the bonus size, resulting a more refined regret bound when variances are heterogeneous. Apart from the results of distribution-dependent regret bounds, it has been shown that the distribution-free regret has lower bound $\Omega(\sqrt{KT})$. The first algorithm that matches this bound is given by \citep{audibert2009minimax}. Recently works attempt to design algorithms that establish simultaneous optimality in multiple worlds. \citep{garivier2022kl} established simultaneous optimality in both distribution-dependent and distribution-free settings. \citep{zimmert2021tsallis} proposed an algorithm based on Online Mirror Descent (OMD) that is optimal in both stachastic and adversarial multi-armed bandit with bounded rewards.

In the field of Reinforcement Learning (RL), the optimism in the face of uncertainty (OFU) principle has also been a focal point to tackle exploration. \citep{azar2017minimax} introduced the UCB-VI algorithm, while \cite{jin2018q} offered variants UCB-H and UCB-B, which use upper confidence bounds for exploration as opposed to basic $\varepsilon$-greedy strategies. But no experimental results were provided.

\section{Technical Results}

\begin{fact2} \label{fact: Hoefdding inequality} (Hoeffding inequality for subGaussians) \citep{wainwright2019high}
    We say an $\R$-valued random variable $X$ is $\sigma$-subGaussian, for some $\sigma > 0$, if $\E[e^{\lambda(X - \E[X])}] \le e^{\frac 12 \lambda^2 \sigma^2}$ for all $\lambda\in\R$. If so, for any $\epsilon > 0$, we have
    \begin{equation} \label{eq: Hoeffding inequality}
        \P\left( X - \E[X] > \epsilon \right) \le \exp(-\frac {\epsilon^2}{2\sigma^2})
    \end{equation}
    Eq. \eqref{eq: Hoeffding inequality} is the Hoeffding inequality for subGaussians.

    Suppose $X_1, \ldots, X_n$ is an adapted sequence of $\R$-valued random variables, such that $X_i$ is $\sigma_i$-subGaussian conditioned on $X_1, \ldots, X_{i-1}$, for $i \in [n]$, then
    \begin{equation} \label{eq: Azuma-Hoeffding inequality}
        \P(\frac 1n \sum_{i=1}^n X_i - \frac 1n \sum_{i=1}^n \E[X_i] > \epsilon ) \le \exp(-\frac {n^2\epsilon^2}{2\sum_{i=1}^n \sigma_i^2}).
    \end{equation}
    Eq. \eqref{eq: Azuma-Hoeffding inequality} is the Azuma-Hoeffding inequality for subGaussians.

\end{fact2}

\begin{fact2}\label{lemma:holder} (Hölder inequality) Suppose $p, q > 0$ and $\frac 1p +\frac 1q = 1$. For any sequences $(x_n) \in \ell_+^p, (y_n) \in \ell_+^q$,
\begin{equation}
    \sum_{n=1}^\infty x y \le \left(\sum_{n=1}^\infty x^p \right)^{1/p} \left(\sum_{n=1}^\infty x^q \right)^{1/q}.
\end{equation}
\end{fact2}

\begin{lemma2}\label{lemma: butterfly} (Butterfly inequality)
For all $a,b \ge 0, 0 \le k \le 1$, 
\begin{equation*}
    a + b \ge \frac {a^k b^{1-k}}{k^{k}(1-k)^{1-k}}
\end{equation*}
where we define $0^0 = 1$.
\end{lemma2}

\begin{proof}
    For $k=0,1$ the assertion is obvious. Assume $0<k<1$. Let $f(x) = (\frac ak)^x (\frac {b}{1-k})^{1-x}$, then $f(x)$ is a convex function at $0<x < 1$. Hence $(1 - s) f(0) + s f(1) \ge f(s)$ for all $0 < s < 1$. In particular, $(1 - k) f(0) + k f(1) \ge f(k)$. Now plug in the definition of $f(\cdot)$ yields the assertion.
\end{proof}

\begin{lemma2} \label{lemma:zeta2}
Let $\zeta_N(s) = \sum_{n=1}^N n^{-s}$. Then for $0 \le s  \le 1$, 
\begin{equation*}
    \zeta_N(s) \le 1 + N^{1 - s} \log N
\end{equation*}
\end{lemma2}

\begin{proof}
    First, since $f(x)=x^{-s}$ is non-increasing function, 
    \begin{equation*}
        \zeta_N(s) \le 1 + \int_1^N x^{-s} dx 
    \end{equation*}
    For $s = 0, 1$, the claim is trivial. Suppose $0 < s < 1$, then, using integration by part,
    \begin{equation*}
        \int_1^N x^{-s} dx = \int_1^N x^{1 - s} d\log x = N^{1-s} \log N - \int_1^N \frac {x^{-s}}{1 - s} \log x dx \le N^{1-s} \log N 
    \end{equation*}
    The desired result is obtained.
\end{proof}

\begin{lemma2} \label{techlemma:s-1}
    Let $\zeta(s) = \sum_{n=1}^\infty n^{-s}$, defined for $s > 1$. Then
    \begin{equation}
        \zeta(s) \le \frac {s}{s-1}.
    \end{equation}
\end{lemma2}

\section{Proofs in Section \ref{sec:regret}} \label{sec:proof-thm1}

The decomposition is given by
\begin{equation}\label{eq:sep-of-var}
    \begin{aligned}
            & \{A_t = a\} \\
            \subseteq & \{I_a(t) \ge I_\ast(t)\} \\
            \subseteq & \{ I_a(t) \ge \mu_a + \Delta_a - \varepsilon \} \bigcup \{ I_\ast(t) \le \mu_\ast - \varepsilon \}
    \end{aligned}
\end{equation}
Define, for $\varepsilon > 0$,
\begin{equation}
    N_{a}^{(1)}(t, \varepsilon) \triangleq \sum\nolimits_{t=1}^T \ind\{A_{t} = a, I_a(t) \ge \mu_a + \Delta_a - \varepsilon\} , \quad
N_{a}^{(2)}(t, \varepsilon) \triangleq \sum\nolimits_{t=1}^T\ind\{ I_\ast(t) \le \mu_\ast - \varepsilon \}.
\end{equation}
Thus Eq. \eqref{eq:sep-of-var} gives 
\begin{equation}
    N_{a}(T) \le N_{a}^{(1)}(T, \varepsilon) + N_{a}^{(2)}(T, \varepsilon). 
\end{equation}
The rest of the work is to bound $N_{a}^{(1)}$ and $N_{a}^{(2)}$ respectively. 

\begin{lemma} \label{lemma:nta1}
    For $\varepsilon_1, \varepsilon_2 > 0$ satisfying $0<\varepsilon_1<\Delta_a-\varepsilon_2$,
    \begin{equation}\label{eqn:n1}
        \E[N_{a}^{(1)}(T, \varepsilon_1)] \le 1 + \frac {2\sigma_a^2}{\varepsilon_2^{2}} + \# \{ s \in [T - 1] : c_{T,s,a} \ge  \Delta_a - \varepsilon_1 - \varepsilon_2 \},
    \end{equation}
\end{lemma}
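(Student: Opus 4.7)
The strategy is to split the event $\{A_t = a,\ I_a(t) \ge \mu_a + \Delta_a - \varepsilon_1\}$ into a ``concentration failure'' piece and a ``large bonus'' piece, then handle each separately. Since $I_a(t) = \hat{\mu}_a(\mathcal{N}_a(t-1)) + c_{t, N_a(t-1), a}$ and $\Delta_a - \varepsilon_1 = \varepsilon_2 + (\Delta_a - \varepsilon_1 - \varepsilon_2)$, on $\{I_a(t) \ge \mu_a + \Delta_a - \varepsilon_1\}$ at least one of
\begin{equation*}
\hat{\mu}_a(\mathcal{N}_a(t-1)) \ge \mu_a + \varepsilon_2,
\qquad
c_{t, N_a(t-1), a} \ge \Delta_a - \varepsilon_1 - \varepsilon_2
\end{equation*}
must hold. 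Hence $N_a^{(1)}(T, \varepsilon_1) \le M_1 + M_2$, where $M_i$ counts the $t \in [T]$ with $A_t = a$ for which the $i$th inequality holds.

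First I would bound $\E[M_1]$ by re-indexing through the hit times of arm $a$: at $t = H_a(k)$ we have $N_a(t-1) = k - 1$ and $\hat{\mu}_a(\mathcal{N}_a(t-1)) = \hat{\mu}_a(\mathcal{H}_a(k-1))$, while the rewards $X_{H_a(n), a}$ form an iid $\sigma_a$-subGaussian sequence with mean $\mu_a$ because $\{A_t = a\} \in \mathcal{F}_t$ is independent of the fresh draw $X_{t, a}$. Treating the $n = 0$ term separately (it contributes at most $1$) and applying Fact~\ref{fact: Hoefdding inequality} for $n \ge 1$,
\begin{equation*}
\E[M_1] \le 1 + \sum_{n=1}^{\infty} \P\bigl(\hat{\mu}_a(\mathcal{H}_a(n)) \ge \mu_a + \varepsilon_2\bigr) \le 1 + \sum_{n=1}^{\infty} e^{-n \varepsilon_2^{2}/(2\sigma_a^{2})} \le 1 + \frac{2\sigma_a^{2}}{\varepsilon_2^{2}},
\end{equation*}
using the elementary bound $\sum_{n \ge 1} e^{-cn} \le 1/c$.

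Next, for $M_2$ I would exploit the monotonicity of the UCB$^\tau$ bonus $c_{t, s, a} = (\alpha_a \log t / s)^\tau$ in $t$: the event $c_{t, N_a(t-1), a} \ge \Delta_a - \varepsilon_1 - \varepsilon_2$ implies $c_{T, N_a(t-1), a} \ge \Delta_a - \varepsilon_1 - \varepsilon_2$. Re-indexing once more, $N_a(t-1)$ takes each value in $\{1, \ldots, N_a(T) - 1\}$ at most once as $t$ varies over pull times of $a$ (the $N_a(t-1) = 0$ case is absorbed by the ``$1+$'' from the $M_1$ bound), so
\begin{equation*}
M_2 \le \#\{s \in [T-1] : c_{T, s, a} \ge \Delta_a - \varepsilon_1 - \varepsilon_2\}.
\end{equation*}
Summing the two contributions yields~\eqref{eqn:n1}.

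The main obstacle is the iid re-indexing in the $M_1$ step: to apply Hoeffding to $\hat{\mu}_a(\mathcal{H}_a(n))$ one must justify that rewards observed at the adapted hit times $H_a(n)$ behave as an iid sample of size $n$. This is the only genuinely probabilistic point; everything else is deterministic bookkeeping through the decomposition and the monotonicity of $c_{t, s, a}$ in $t$.
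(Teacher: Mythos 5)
Your proof is correct and follows essentially the same route as the paper's: the same pigeonhole split of $\Delta_a - \varepsilon_1$ into $\varepsilon_2 + (\Delta_a - \varepsilon_1 - \varepsilon_2)$, the same re-indexing through the hit times $\mathcal{H}_a(n)$ combined with monotonicity of $c_{t,s,a}$ in $t$, and the same Hoeffding-plus-geometric-series bound yielding $2\sigma_a^2/\varepsilon_2^2$ (the paper merely re-indexes first and splits the resulting sum at the threshold index $u$, rather than union-bounding the event before re-indexing). The only point to tidy is that the first pull of arm $a$, where $N_a(t-1)=0$ and the bonus is infinite, should be peeled off once before the split so it is not counted in both $M_1$ and $M_2$, keeping the additive constant at $1$ rather than $2$.
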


\begin{proof}[\textbf{proof of Lemma \ref{lemma:nta1}}]  
    

    \begin{equation} \label{eq:nta1-long}
        \begin{aligned}
            \E[N_{a}^{(1)}(T, \varepsilon_1)]
            =  &\E \left[ \sum\nolimits_{t=1}^T  \ind\{A_{t} = a, I_a(t) \ge \mu_a + \Delta_a - \varepsilon_1\} \right] \\ 
            \le & \E[ 1 + \sum\nolimits_{n=2}^{N_a(T)} \ind\{ I_{H_a(n),n-1,a} \ge \mu_a + \Delta_a - \varepsilon_1 \}] \\
            \le & \E[ 1 + \sum\nolimits_{n=1}^{T-1} \ind\{ I_{T,n,a} \ge \mu_a + \Delta_a - \varepsilon_1 \}] \\
            =   & 1 + \sum\nolimits_{n=1}^{T-1} \P(I_{T,n,a} \ge \mu_a + \Delta_a - \varepsilon_1 ) \\
            \le & 1 + u + \sum_{n=u+1}^{T-1} \P(I_{T,n,a} \ge \mu_a + \Delta_a - \varepsilon_1 ) \\
        \end{aligned}
    \end{equation}
    where $u$ in $\textbf{(I4)}$ is defined as
    \begin{equation} \label{eq:nta1-u}
        u \triangleq \# \{ n \in [T-1] : c_{T,n,a} \ge \Delta_a - \varepsilon_1 - \varepsilon_2 \}.
    \end{equation}
    The summand in the last line of Eq. \eqref{eq:nta1-long} is
    \begin{equation}
        \P(\hat{\mu}_a(\mathcal{H}_a(n)) + c_{T, n, a} \ge \mu_a + \Delta_a - \varepsilon_1 ).
    \end{equation}
    But since $n > u$, by Eq. \eqref{eq:nta1-u}, the above quantity is no greater than
    \begin{equation}
        \P(\hat{\mu}_a(\mathcal{H}_a(n)) + c_{T, n, a} \ge \varepsilon_2 ).
    \end{equation}
    By Doob's optional sampling theorem, $X_{H_a(i), a} \mid X_{H_a(1), a}, \ldots X_{H_a(i-1)}$ shares the same distribution with $X_{1, a}$. Hence $(X_{H_a(i)} - \mu_a)$ is a martingale. Apply Azuma-Hoeffding inequality (Fact \ref{fact: Hoefdding inequality}), we have
    \begin{equation} \label{eq:nta1-exp}
        \sum_{s=u+1}^{T-1} \P(I_{T,s,a} \ge \mu_a + \Delta_a - \varepsilon_1 ) \le \sum\nolimits_{s=u+1}^{T-1} \exp\left(-\frac {s\varepsilon_2^2}{2\sigma_a^{2}}\right) \le \frac {2\sigma_a^2}{\varepsilon_2^{2}}.
    \end{equation}
    Now combine \eqref{eq:nta1-long} \& \eqref{eq:nta1-exp} gives the result.

\end{proof}


Unlike $N_{a}^{(1)}$ that increases as the optimistic bonus grows larger, the $N_{a}^{(2)}$ decreases as the optimistic bonus increases. 
This is the time where the optimistic bonus becomes a savior, 
when encountering bizarre scenarios where the optimal arm $A_\ast$ is under-sampled and under-estimated at the same time. 
The existence of gap lends natural privilege to select $A_\ast$ over other arms, 
and as long as $A_\ast$ is selected frequently enough, repeatedly selecting it will not be a problem. It only requires a small number of samples to identify this advantage. Lemma \ref{lemma:nta2} shows that the inferior sampling probability in $N_{a}^{(2)}$ is bounded by the optimistic bonus in a smooth way, resulting in their sum converging to a finite value as $T\rightarrow \infty$. This desirable property cannot be attained without the optimistic bonus, since in the greedy case, the worst-case situation occurs at $N_{\ast}(t-1) = 1$, causing $N_{a}^{(2)}$ to grow in an order of $O(T)$.
  
\begin{lemma} \label{lemma:nta2}
    Suppose $0 < k < 1$. Suppose $c_{t, s, a}$ satisfies
    \begin{itemize}[leftmargin=*]
        \item[a)] $s \cdot (c_{t, s, \ast})^{2k}$ is a non-decreasing function of $s$ for any $t \in [T]$.
        \item[b)] $c_{t, 1, \ast}^{2k} > \alpha_\ast \log t$ for any $t \in[T]$.
    \end{itemize}
    Let $B(k) \triangleq k^k (1 - k)^{1-k}$. Define
    \begin{equation} \label{eq:beta-ek}
        \beta(\varepsilon, k) = \frac {2 B(k)^2 \sigma_{\ast}^2}{\varepsilon^{ 2 - 2k}}.    
    \end{equation}
    When $\alpha_{\ast} > \beta(\varepsilon, k)$, we have 
    \begin{equation}
         \E[N_{a}^{(2)}(T, \varepsilon)] \le \frac {2\sigma_\ast^2}{\delta^2} \frac {\alpha_\ast}{\alpha_\ast - \beta(\varepsilon - \delta, k)}.
    \end{equation}
    for any $\delta$ with $0 < \delta < \varepsilon$ and $\beta(\varepsilon, k) < \beta(\varepsilon-\delta, k) < \alpha_\ast$.
\end{lemma}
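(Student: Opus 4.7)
\textbf{Proof proposal for Lemma \ref{lemma:nta2}.} The plan is to convert the event $\{I_\ast(t) \le \mu_\ast - \varepsilon\}$ into a deviation event for the sample mean of $A_\ast$, bound it by Azuma--Hoeffding, and then split the resulting exponent so that one factor is summable in $n$ (giving the $2\sigma_\ast^2/\delta^2$ prefactor) while the other factor decays polynomially in $t$ (giving the $\alpha_\ast/(\alpha_\ast-\beta(\varepsilon-\delta,k))$ factor via the hyperharmonic sum).

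The first step is a union bound on $N_\ast(t-1)$. On the event $\{N_\ast(t-1)=n\}$, the index is $I_\ast(t)=\hat\mu_\ast(\mathcal{H}_\ast(n)) + c_{t,n,\ast}$, so $\{I_\ast(t) \le \mu_\ast - \varepsilon\}$ implies $\mu_\ast - \hat\mu_\ast(\mathcal{H}_\ast(n)) \ge \varepsilon + c_{t,n,\ast}$. For $n=0$ the event is vacuous by assumption (b) (the bonus dominates any deviation, making $I_\ast$ effectively infinite), so
\begin{equation*}
\E[N_a^{(2)}(T,\varepsilon)] \le \sum_{t=1}^T \sum_{n=1}^{t-1} \P\bigl(\mu_\ast - \hat\mu_\ast(\mathcal{H}_\ast(n)) \ge \varepsilon + c_{t,n,\ast}\bigr).
\end{equation*}
By Doob's optional sampling (as used already in the proof of Lemma \ref{lemma:nta1}) the increments $X_{H_\ast(i),\ast}-\mu_\ast$ form a martingale difference sequence, so Azuma--Hoeffding (Fact \ref{fact: Hoefdding inequality}) bounds the summand by $\exp\bigl(-n(\varepsilon+c_{t,n,\ast})^2/(2\sigma_\ast^2)\bigr)$.

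The heart of the argument is now a two-way split of $(\varepsilon+c_{t,n,\ast})^2$. Writing $\varepsilon+c = \delta + \bigl((\varepsilon-\delta)+c\bigr)$ and using $(a+b)^2\ge a^2+b^2$ for $a,b\ge 0$ gives
\begin{equation*}
(\varepsilon+c_{t,n,\ast})^2 \ge \delta^2 + \bigl((\varepsilon-\delta)+c_{t,n,\ast}\bigr)^2.
\end{equation*}
The first piece produces a pure Gaussian tail $\exp(-n\delta^2/(2\sigma_\ast^2))$ which will be summed as a geometric series in $n$, yielding at most $2\sigma_\ast^2/\delta^2$ (using $1/(e^x-1)\le 1/x$). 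For the second piece I apply the butterfly inequality (Lemma \ref{lemma: butterfly}) with weights $1-k$ and $k$ to get $\bigl((\varepsilon-\delta)+c_{t,n,\ast}\bigr)^2 \ge (\varepsilon-\delta)^{2(1-k)} c_{t,n,\ast}^{2k}/B(k)^2$, so that after multiplying by $n$ and dividing by $2\sigma_\ast^2$ the exponent becomes $n c_{t,n,\ast}^{2k}/\beta(\varepsilon-\delta,k)$ with $\beta$ exactly as in Eq. \eqref{eq:beta-ek}.

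The monotonicity assumption (a) then gives $n c_{t,n,\ast}^{2k}\ge 1\cdot c_{t,1,\ast}^{2k}$, and assumption (b) yields $c_{t,1,\ast}^{2k}>\alpha_\ast \log t$, so the second factor is bounded by $t^{-\alpha_\ast/\beta(\varepsilon-\delta,k)}$. Combining:
\begin{equation*}
\E[N_a^{(2)}(T,\varepsilon)] \le \biggl(\sum_{n=1}^\infty e^{-n\delta^2/(2\sigma_\ast^2)}\biggr) \cdot \sum_{t=1}^T t^{-\alpha_\ast/\beta(\varepsilon-\delta,k)} \le \frac{2\sigma_\ast^2}{\delta^2}\,\zeta\!\bigl(\alpha_\ast/\beta(\varepsilon-\delta,k)\bigr),
\end{equation*}
and Lemma \ref{techlemma:s-1} (applicable since the hypothesis $\alpha_\ast>\beta(\varepsilon-\delta,k)$ ensures $s>1$) delivers the stated bound $\frac{2\sigma_\ast^2}{\delta^2}\cdot \frac{\alpha_\ast}{\alpha_\ast-\beta(\varepsilon-\delta,k)}$.

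The main obstacle is the decomposition step: one must spot that the slack $\delta$ has to be removed from $\varepsilon$ \emph{before} applying butterfly, not after, so that the $n$-sum converges geometrically rather than merely polynomially. Had I used butterfly directly on $\varepsilon+c$, the bound $\exp(-n c^{2k}/\beta(\varepsilon,k))$ would only give $t^{-\alpha_\ast/\beta(\varepsilon,k)}$ after summing in $n$, which still needs the subsequent $\zeta$-bound but loses the geometric decay in $n$ entirely and produces an extra factor of $t$ that destroys convergence. The $\delta$-shift is precisely what decouples the two summations and explains why the statement requires $\beta(\varepsilon,k)<\beta(\varepsilon-\delta,k)<\alpha_\ast$ rather than the weaker $\alpha_\ast>\beta(\varepsilon,k)$ alone.
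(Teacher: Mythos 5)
Your proposal is correct and follows essentially the same route as the paper's own proof: union bound over $N_\ast(t-1)$, Azuma--Hoeffding on the stopped martingale, the $\delta$-shift $\varepsilon = \delta + (\varepsilon-\delta)$ combined with $(a+b)^2 \ge a^2+b^2$, the butterfly inequality with weights $k$ and $1-k$ to produce $\beta(\varepsilon-\delta,k)$, monotonicity (a) plus condition (b) to decouple the double sum into a geometric series in $n$ and a hyperharmonic sum in $t$, and finally Lemma \ref{techlemma:s-1}. Your closing remark on why the $\delta$ must be peeled off \emph{before} applying the butterfly inequality accurately identifies the one non-obvious step and matches the paper's implicit reasoning.
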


\begin{remark}
    Lemma \ref{lemma:nta2} shows that, when $\alpha_\ast$ is sufficiently large, $N_a^{(2)}$ is upper bounded by a constant, thus has zero contribution to the asymptotic growth of regret. 
\end{remark}

\begin{proof} [\textbf{proof of Lemma \ref{lemma:nta2}}] 

    By definition 
    \begin{equation}
        \E[N_{a}^{(2)}(T, \varepsilon)] =  \sum\nolimits_{t = 1}^T \P \left( I_{t, N_a(t-1), a} \le \mu_\ast - \varepsilon \right).
    \end{equation}
    Take $N_a(t-1) = 1, \ldots t-1$ and apply union bound, we have
    \begin{equation} \label{eq:nta2-doublesum}
        \E[N_{a}^{(2)}(T, \varepsilon)] \le \sum\nolimits_{t=1}^T \sum\nolimits_{n=1}^{t-1} \P\left(I_{t, n, a} \le \mu_\ast - \varepsilon \right).
    \end{equation}
    Apply Azuma-Hoeffding inequality, the summand is no greater than
    \begin{equation}
        \exp \left( - \frac {n(\varepsilon+ c_{t, n, \ast} + \delta)^2}{2\sigma_\ast^2} \right).
    \end{equation}
    Simple algebra gives
    \begin{equation}
        \begin{aligned}
             &\exp \left( - \frac {n(\varepsilon+ c_{t, n, \ast} + \delta)^2}{2\sigma_\ast^2} \right) \\
             \le& \exp \left( - \frac {n((\varepsilon-\delta) + c_{t, n, \ast} + \delta)^2}{2\sigma_\ast^2} \right) \\
             \le& \exp\left( - \frac { (\varepsilon-\delta)^{2-2k}n(c_{t, n, \ast})^{2k} }{2\, B(k)^2\sigma_{\ast}^2} \right) \cdot \exp\left(- \frac {n\delta^2}{2\sigma_\ast^2}  \right)
        \end{aligned}
    \end{equation}
    The left term, by condition (b), is no greater than
    \begin{equation}
        \exp\left(- \frac {c_{t, 1, \ast}^{2k}}{\beta(\varepsilon-\delta, k)}\right)
    \end{equation}
    which only depends on $t$ but not $n$. Hence the double summation of Eq. \ref{eq:nta2-doublesum} can be decoupled and bounded as
    \begin{equation}
        \begin{aligned}
            & \sum\nolimits_{t=1}^T \exp\left(- \frac {c_{t, 1, \ast}^{2k}}{\beta(\varepsilon-\delta, k)}\right) \sum_{s=1}^{\infty}{\exp\left(-\frac {s\delta^2}{2\sigma_\ast^2} \right)} \\
            \le & \frac {2\sigma_\ast^2}{\delta^2} \zeta\left(\frac {\alpha_{\ast}}{\beta(\varepsilon-\delta, k)} \right) \\
            \le & \frac {2\sigma_\ast^2}{\delta^2} \frac {\alpha_\ast}{\alpha_\ast - \beta(\varepsilon - \delta, k)}
        \end{aligned}
    \end{equation}
    where the last line is due to Technical Lemma \ref{techlemma:s-1}. The destination is reached.
    
\end{proof}

Combine Lemma \ref{lemma:nta1} \& \ref{lemma:nta2} leads to the upper bound of $\E[N_{a}(T)]$ in Theorem \ref{them:little}, which in turn leads to a tight regret bound for UCB$^\tau$ algorithm. 
 
The first term on the right-hand side of Eq. \eqref{eqn:n1} represents the threshold $s$ that must be reached for $c_{t, s, a}$ to fall below the gap $\Delta_a-\epsilon_1-\epsilon_2$. In the proof, we show that once this threshold is reached, the algorithm switches to the exploitation mode in an adaptive way. Additionally, since sufficient samples are collected, the inferior sampling rate decays exponentially.

\begin{proof} [\textbf{proof of Theorem \ref{them:little}}]
Apply Lemma \ref{lemma:nta1} to be bonus $c_{t,s,a} = (\alpha_a \cdot \frac {\log t}{s})^\tau$ gives
\begin{equation} \label{eq:thm1-nta1-all}
    \E[n_{T,a}^{(1)}(\varepsilon_1)] \le  \# \{ s \in [T-1] : \left( \alpha_a \cdot \frac {\log t}{s} \right)^\tau \ge \Delta_a - \varepsilon_1 - \varepsilon_2 \} + 1 + 2\varepsilon_2^{-2} \sigma_a^2.
\end{equation}
The first term is computed as
\begin{equation} \label{eq:thm1-nta1-1}
    \begin{aligned}
        & \#\{ s \in [T-1] : \left( \alpha_a \cdot \frac {\log T}{s} \right)^\tau \ge \Delta_a - \varepsilon_1 - \varepsilon_2 \} \\
        = & \#\{ s \in [T-1]: s \le \alpha_a (\Delta_a - \varepsilon_1 - \varepsilon_2)^{-1/\tau} \log T \} \\
        &\le  \alpha_a (\Delta_a - \varepsilon_1 - \varepsilon_2)^{-1/\tau}\log T.
    \end{aligned}
\end{equation}
Now takes $\varepsilon_1 = \Delta_a(1 - \frac 1{2\tau})$, $\varepsilon_2 = \Delta_a \eta$, and summarize Eq. \eqref{eq:thm1-nta1-all} \eqref{eq:thm1-nta1-1} gives 
\begin{equation} \label{eq:nta1-thm1}
    \begin{aligned}
        \E[n_{T,a}^{(1)}(\varepsilon_1)] & \le \alpha_a \Delta_a^{-1/\tau}(\frac 1{2\tau} - \eta)^{-1/\tau} \log T + 1 + \frac {2\sigma_a^2}{\Delta_a^2 \eta^2} \\
        = & \left(\frac {\alpha_a}{\beta_a(\tau)} \right) (1 - 2\tau \eta)^{-1/\tau}\frac {2\sigma_\ast^2}{\Delta_a^2} \log T + 1 + \frac {2\sigma_a^2}{\Delta_a^2 \eta^2}
    \end{aligned}
\end{equation}
where in the last line we multiplied $\beta_a(\tau)$ on both denominator and numerator to the first term.

The next step is to apply Lemma \ref{lemma:nta2} to the bonus. We take $\varepsilon = \varepsilon_1$ and $k = 1 / (2\tau)$ so that the quantity $\beta(\varepsilon, k)$ in Eq. \eqref{eq:beta-ek} coincides with $\beta(\tau)$ in Eq. \eqref{eq:beta-a}. 

Take $\delta$ such that
\begin{equation}
    \beta_a(\tau) < \beta(\varepsilon_1 - \delta, k) < \alpha_\ast
\end{equation}
Then Lemma \ref{lemma:nta2} reads 
\begin{equation} \label{eq:nta2-thm1}
         \E[n_{T, a}^{(2)}(\varepsilon_1)] \le \frac {2\sigma_\ast^2}{\delta^2} \zeta\left(\frac {\alpha_{\ast}}{\beta(\varepsilon_1-\delta, k)} \right).
\end{equation}
Combine Eq. \eqref{eq:nta1-thm1} \& \eqref{eq:nta2-thm1} gives the result.

\end{proof}

\section{Proofs in Section \ref{sec: risk}}

First we show that 
\begin{equation}
    \begin{aligned}
        & \{N_a(T) > u \} \\
      = & \bigcup_{t=u+1}^T \{ H_a(u+1) = t\} \\
      \subseteq & \bigcup_{t=u+1}^T \{ I_{t,u,a} \ge \mu^\ast \} \cup \{H_a(u+1) = t, I_{t,N_\ast(t-1),\ast} \le \mu_\ast \}
    \end{aligned}
\end{equation} 
So that
\begin{equation} \label{eq:NaT-u}
    \P(N_a(T) > u) \le \sum_{t=u+1}^T \P(I_{t,u,a} \ge \mu_a + \Delta_a - \varepsilon) + \sum_{t=u+1}^T \sum_{n=1}^{t-u} \P(I_{t, n, \ast} \le \mu_\ast - \varepsilon)
\end{equation}
Let $u$ be the smallest integer such that
\begin{equation}
    c_{T, u, a} \le \Delta_a - \varepsilon - \varepsilon_2
\end{equation}
which implies
\begin{equation}
    u \ge \frac {\alpha_a \log T}{(\Delta_a - \varepsilon - \varepsilon_2)^{-1/\tau} }.
\end{equation}
Then the first term of Eq. \eqref{eq:NaT-u} is bounded by
\begin{equation}
    \begin{aligned}
        & \sum_{t=u+1}^T \P(I_{t,u,a} \ge \mu_a + \Delta_a - \varepsilon) \\
        \le & T \exp(-\frac {u \varepsilon_2^2}{2\sigma_a^2}) \\
    \end{aligned}
\end{equation}

The second term is bounded by
\begin{equation}
    \begin{aligned}
        & \sum_{t=u+1}^T \sum_{n=1}^{t-u} \P(I_{t, n, \ast} \le \mu_\ast - \varepsilon) \\
        \le & \sum_{t=u+1}^T \sum_{n=1}^{t-u} \exp\left(-\frac {n(\varepsilon + c_{t,n,\ast})^2}{2\sigma_\ast^2} \right) \\
        \le & \sum_{t=u+1}^T \sum_{n=1}^{t-u} \exp\left( - \frac { (\varepsilon-\delta)^{2-2k}n(c_{t, n, \ast})^{2k} }{2\, B(k)^2\sigma_{\ast}^2} \right) \cdot \exp\left(- \frac {n\delta^2}{2\sigma_\ast^2}  \right) \\
        \le & \sum\nolimits_{t=u+1}^T \exp\left(- \frac {c_{t, 1, \ast}^{2k}}{\beta(\varepsilon-\delta, k)}\right) \sum_{n=1}^{\infty}{\exp\left(-\frac {n\delta^2}{2\sigma_\ast^2} \right)} \\
        \le & \frac {2\sigma_\ast^2}{\delta^2} \sum\nolimits_{t=u+1}^T \exp\left(- \frac {\alpha_\ast \log t}{\beta(\varepsilon-\delta, k)}\right) \\
        \le &  \frac {2\sigma_\ast^2}{\delta^2} u^{\frac {\beta(\varepsilon - \delta, k) -\alpha_\ast}{\beta(\varepsilon-\delta, k)}}
    \end{aligned}
\end{equation}
which decays polynomially with $u$.






\section{Rest of Experiments \ref{sec:exp}}\label{sec: Rest of Experiments}

\begin{figure*}[ht]
  \centering
  \includegraphics[width=1\textwidth]{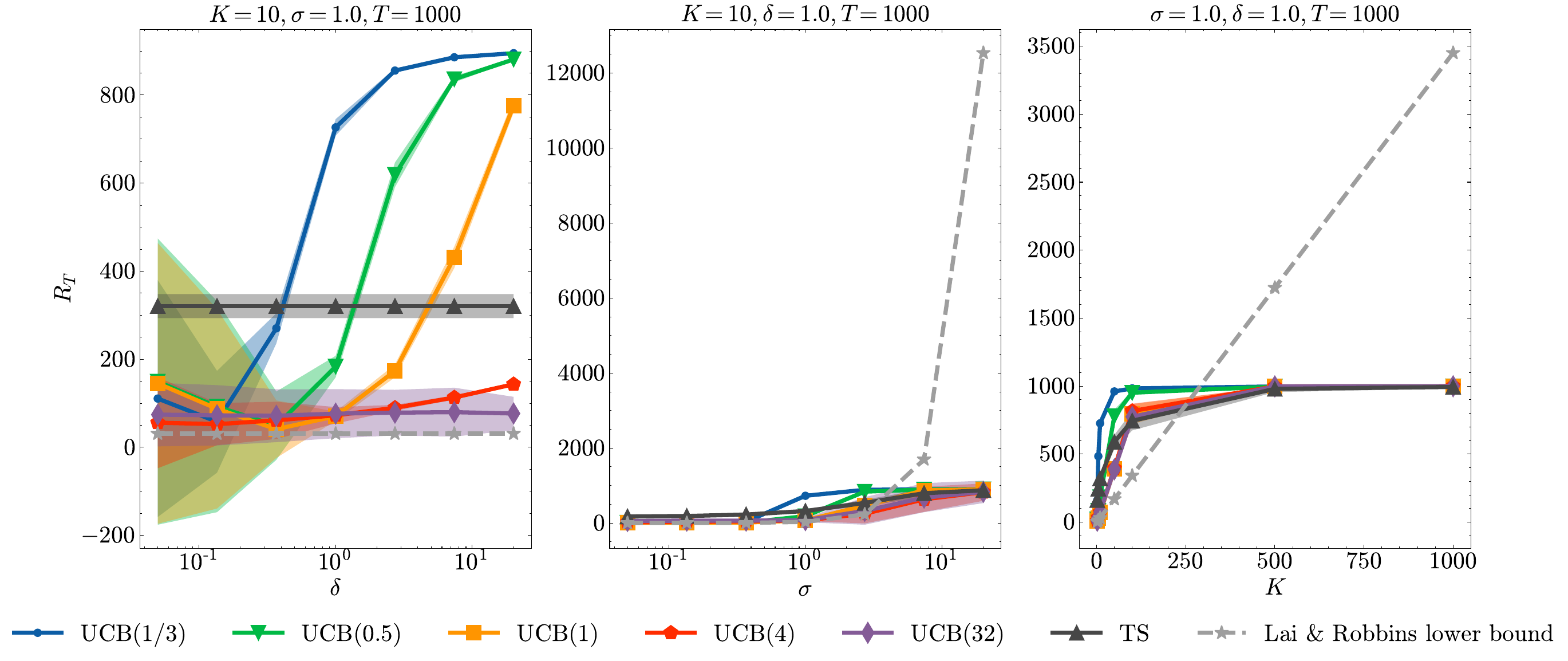}
  \caption{Regret in Various Configurations of the Algorithm in Gaussian Reward Scenarios, with a Focus on the Regret Mean and Standard Deviation (Shaded Region) at T=1000 over 4,096 Repetitions. The left graph represents the scenario when $\delta$ changes while other factors remain constant. The middle graph illustrates the case when $\sigma$ varies while other factors remain unchanged. The rightmost graph depicts the scenario when the number of arms changes while other factors remain constant.}
  \label{fig:gaussian_threeinone_T=1000}
\end{figure*}

\begin{figure*}[ht]
  \centering
  \includegraphics[width=1\textwidth]{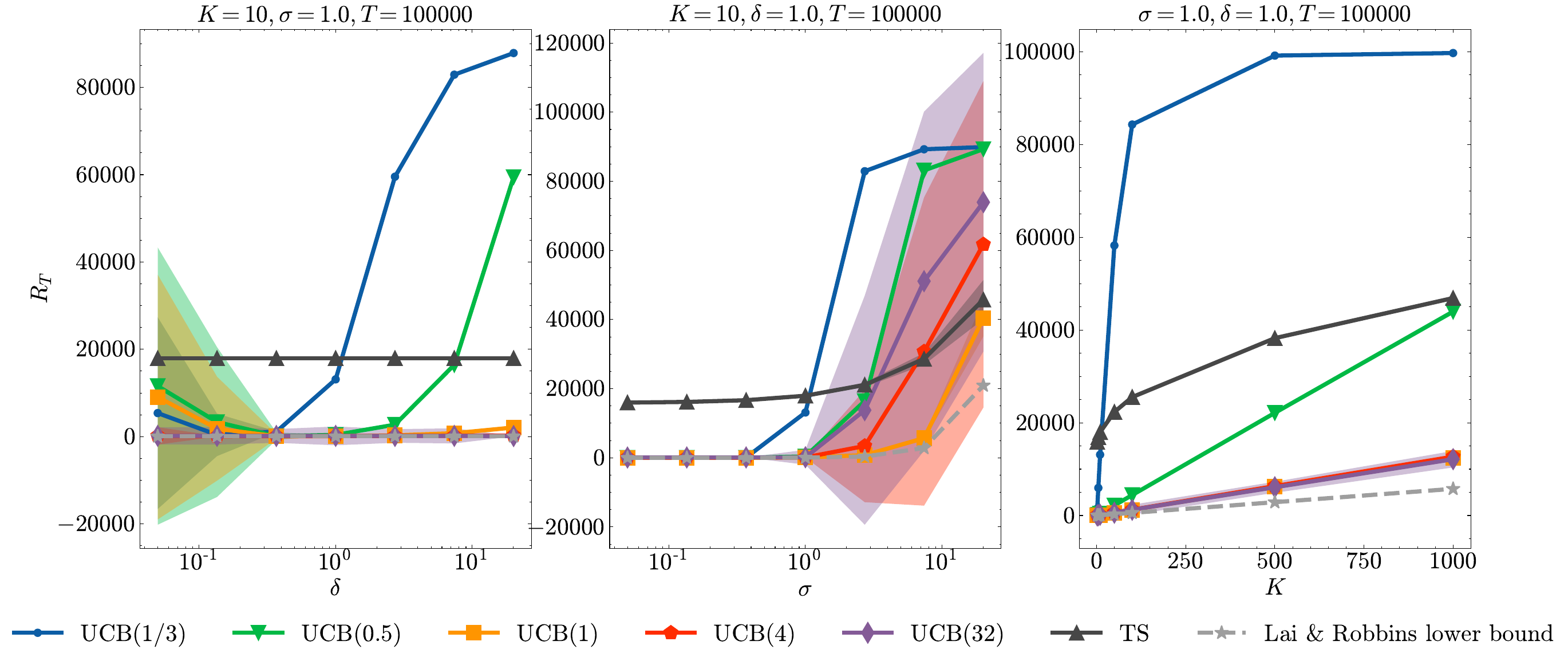}
  \caption{Various $\delta$}
  \caption{Regret in Various Configurations of the Algorithm in Gaussian Reward Scenarios, with a Focus on the Regret Mean and Standard Deviation (Shaded Region) at T=100000 over 4,096 Repetitions. The left graph represents the scenario when $\delta$ changes while other factors remain constant. The middle graph illustrates the case when $\sigma$ varies while other factors remain unchanged. The rightmost graph depicts the scenario when the number of arms changes while other factors remain constant.}
\end{figure*}

\end{document}